\documentclass{article}

\usepackage[utf8]{inputenc}
\usepackage[T1]{fontenc}
\usepackage[preprint]{neurips_2026}

\usepackage{graphicx}
\usepackage{amsmath,amssymb,amsthm}
\usepackage{mathtools}
\usepackage{booktabs}
\usepackage{multirow}
\DeclareUnicodeCharacter{0166}{\textT}
\newcommand{\textT}{T}
\usepackage{threeparttable}
\usepackage{algorithm}
\usepackage{algpseudocode}
\usepackage{hyperref}
\usepackage{enumitem}
\usepackage{adjustbox}   
\usepackage{xcolor}
\usepackage[most]{tcolorbox}

\tcbset{
  colback=white,
  colframe=black,
  arc=1.2mm,
  boxrule=0.5pt,
  left=1.4mm,
  right=1.4mm,
  top=1.1mm,
  bottom=1.1mm,
  fonttitle=\bfseries
}

\usepackage{tabularx}
\usepackage{makecell}

\definecolor{pfnpurple}{HTML}{7C4DD6}
\definecolor{epiteal}{HTML}{0F8C95}
\definecolor{aleaorange}{HTML}{E58F1F}

\newtheorem{proposition}{Proposition}
\newtheorem{corollary}{Corollary}
\theoremstyle{remark}

\tcbset{
  colback=white,
  colframe=black,
  arc=1.5mm,
  boxrule=0.6pt,
  left=1.2mm,right=1.2mm,top=1.2mm,bottom=1.2mm
}
\newtcolorbox{pfndefbox}[1][]{title={PFN Definition},#1}
\newtcolorbox{pfntrainbox}[1][]{title={Training Objective},#1}
\newtcolorbox{pfnremarkbox}[1][]{title={Remark},#1}
\newtcolorbox{pfnextbox}[1][]{title={Extension: Heavy-Tailed Noise},#1}

\newtcolorbox{mainideabox}[1][]{
  title={Key idea},
  colback=black!2,
  colframe=black!55,
  boxrule=0.5pt,
  arc=1.2mm,
  left=1.5mm,
  right=1.5mm,
  top=1.2mm,
  bottom=1.2mm,
  fonttitle=\bfseries,
  #1
}

\addtocontents{toc}{\protect\setcounter{tocdepth}{-1}}
\title{Decoupled PFNs: Identifiable Epistemic–Aleatoric Decomposition via Structured Synthetic Priors}

\author{
Richard Bergna$^{1}$ \quad
Stefan Depeweg$^{2}$ \quad
Jose Miguel Hern\'andez-Lobato$^{1}$\\
$^{1}$University of Cambridge \quad
$^{2}$Siemens AG
}

\date{March 2026}

\begin{document}

\maketitle

\begin{abstract}
Prior-Fitted Networks (PFNs) amortize Bayesian prediction by meta-learning over a synthetic task prior, but their standard output is a posterior predictive distribution over noisy observations. For sequential decision-making, such as active learning and Bayesian optimization, acquisition should prioritize epistemic uncertainty about the latent signal rather than irreducible aleatoric observation noise. We show that this epistemic--aleatoric split is not identifiable in general from the posterior predictive distribution alone, even when that distribution is known exactly. We then exploit a distinctive advantage of PFNs: because the synthetic data-generating process is under our control, each task can contain an explicit latent signal and noise function, and the generator can provide query-level labels for both the noiseless target and the observation-noise variance. We use these labels to train a decoupled PFN with separate latent-signal and aleatoric heads. The observation-level predictive is induced by convolving the latent signal distribution with the learned noise model. Empirically, epistemic-only acquisition mitigates the failure mode of total-variance exploration in noisy and heteroscedastic settings. In matched comparisons, decoupled models usually improve over tuned observation-level baselines, with the clearest gains in HPO; in broader sweeps, a decoupled model obtains the best average rank in both HPO and synthetic BO.
\end{abstract}

\section{Introduction}
\label{sec:introduction}

\begin{figure}[t]
    \centering
    \includegraphics[width=0.855\textwidth]{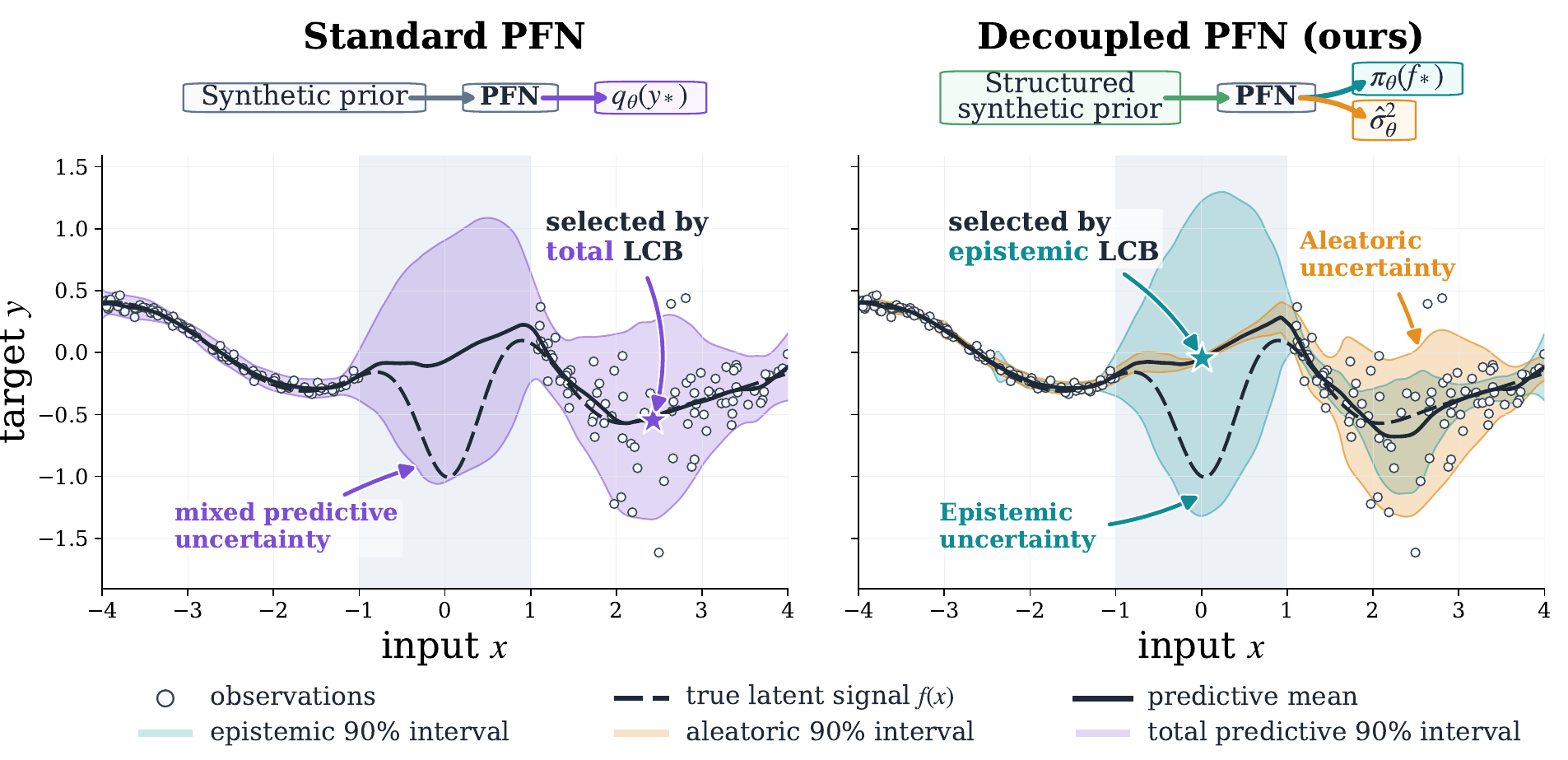}
\caption{
\textbf{Why decoupling matters for acquisition.}
In a heteroscedastic 1D task with an unsupported region (grey), total predictive uncertainty can draw a standard PFN toward high-noise observations. By separating latent-signal uncertainty from observation noise, our decoupled PFN enables epistemic-LCB to target the unsupported region instead. Stars mark selected queries.
}
    \label{fig:teaser_hetero_1d}
\end{figure}

In sequential decision-making, not all uncertainty is equally useful. Active learning, Bayesian optimization, and experimental design benefit from querying points that reduce uncertainty about the latent signal of interest \citep{mackay1992information,houlsby2011bayesian,jones1998efficient,srinivas2009gaussian}. By contrast, querying a point only because its observations are intrinsically noisy can waste budget: repeated measurements may reveal little about the underlying function. This motivates acquisition functions that distinguish \emph{epistemic} uncertainty, which is reducible by data, from \emph{aleatoric} uncertainty due to irreducible observation noise \citep{kendall2017uncertainties,depeweg2018decomposition}.

Prior-Fitted Networks (PFNs) are an attractive foundation for such problems. By meta-learning over synthetic task priors, they amortize Bayesian prediction: a single forward pass maps a context dataset and query input to a posterior predictive distribution \citep{muller2021transformers,nagler2023statistical}. Recent systems such as TabPFN, TabPFN v2, and TabICL demonstrate that this yields strong in-context predictors for tabular data without per-task optimization, ensembling, or extensive hyperparameter tuning \citep{hollmann2022tabpfn,hollmann2025accurate,qu2025tabicl}. However, the standard PFN objective trains the model to predict the noisy target~\(y_*\), so its output is an observation-level predictive \(p(y_* \mid x_*, \mathcal D)\) that combines uncertainty about the latent function with observation noise.

The consequence for acquisition is direct: a posterior predictive may be excellent for forecasting future observations while exposing the wrong uncertainty object for decision-making. Acquisition rules based on total predictive variance can confuse ignorance with noise and repeatedly select high-noise regions \citep{depeweg2018decomposition}. We show that this is not merely a modelling imperfection: the epistemic--aleatoric split is not identifiable from the marginal predictive distribution alone, even when that distribution is known exactly (Proposition~\ref{prop:non_identifiability}). A standard PFN trained only on noisy-target likelihood therefore has no objective-level reason to learn a unique decomposition.

Our key observation is that PFNs have access to a source of supervision
unavailable in ordinary supervised learning. Since pretraining tasks are
generated synthetically, we can augment the task prior so that each task
contains an explicit noiseless signal \(f_\tau\) and observation-noise function
\(\sigma_\tau^2\), with observations generated as
\[
    y = f_\tau(x) + \varepsilon,
    \qquad
    \varepsilon \sim \mathcal N(0,\,\sigma_\tau^2(x)).
\]
The generator provides query-level labels for both \(f_\tau(x_*)\) and
\(\sigma_\tau^2(x_*)\) as privileged auxiliary targets during pretraining; at
test time the model receives only \((\mathcal D, x_*)\), exactly as a standard
PFN. We use these labels to train a decoupled binned PFN whose latent-signal
head predicts a distribution over the noiseless target and whose aleatoric head
predicts the observation-noise variance. The observation-level predictive is
then induced by convolving the latent distribution with the learned noise model,
preserving the standard PFN interface while exposing a decision-relevant
epistemic signal for downstream acquisition.
Figure~\ref{fig:teaser_hetero_1d} illustrates the effect: the latent head
assigns uncertainty to unsupported regions while the aleatoric head tracks
input-dependent noise.

\begin{mainideabox}
A standard PFN is trained to predict noisy observations, so its posterior
predictive does not uniquely determine how uncertainty should be split into
latent-signal uncertainty and observation noise. We make this split supervised
during PFN pretraining: the synthetic generator defines both \(f_\tau(x)\) and
\(\sigma_\tau^2(x)\), and provides them as privileged training targets.
\end{mainideabox}

In summary, our contributions are:
\begin{itemize}[leftmargin=*, itemsep=0.25em, topsep=0.25em]
    \item We show that the posterior predictive learned by a standard PFN does
    not, in general, identify an epistemic--aleatoric decomposition, even when
    the predictive distribution is learned perfectly
    (Proposition~\ref{prop:non_identifiability}).

    \item We introduce a decoupled PFN trained with privileged latent-signal
    and noise-level labels from an augmented synthetic task prior, with
    separate latent-signal and aleatoric heads whose convolution induces the
    observation-level predictive.

    \item We use the resulting latent posterior to define epistemic-only
    acquisition functions for active learning and Bayesian optimization,
    showing that they mitigate the failure mode of total-variance exploration
    in noisy and heteroscedastic settings.
\end{itemize}

\section{Standard PFN Predictives Do Not Identify the Epistemic--Aleatoric Split}
\label{sec:pfn_bayesian}

We now formalize the Bayesian object learned by a standard PFN and show why it
does not determine a unique epistemic--aleatoric split.

Prior-Fitted Networks (PFNs) approximate Bayesian prediction under a synthetic
task prior. A task \(\tau \sim p(\tau)\) specifies a conditional distribution
\(p_\tau(y\mid x)\) over noisy observations. Given a context dataset $\mathcal D=\{(x_i,y_i)\}_{i=1}^{n}$,
and conditioning on the observed inputs, Bayesian inference under this prior
induces a posterior over tasks
\[
    p(\tau\mid \mathcal D)
    \propto
    p(\tau)\prod_{i=1}^{n}p_\tau(y_i\mid x_i).
\]
The corresponding posterior predictive distribution is
\begin{equation}
\label{eq:bayes_posterior_predictive}
    p(y_*\mid x_*,\mathcal D)
    =
    \int p_\tau(y_*\mid x_*)\,p(\tau\mid\mathcal D)\,d\tau .
\end{equation}
A PFN amortizes this computation by training a parametric predictor
\(q_\theta(\cdot\mid x_*,\mathcal D)\) on synthetic tasks:
\begin{equation}
\label{eq:standard_pfn_objective}
    \min_\theta\;
    \mathbb E_{\tau\sim p(\tau)}
    \mathbb E_{\mathcal D\sim p_\tau}
    \mathbb E_{(x_*,y_*)\sim p_\tau}
    \left[
        -\log q_\theta(y_*\mid x_*,\mathcal D)
    \right].
\end{equation}
Since log loss is a strictly proper scoring rule, the population optimum is the
Bayes posterior predictive in \eqref{eq:bayes_posterior_predictive}, assuming
sufficient model capacity and successful optimization. Thus, a standard PFN is
best understood as an amortized approximation to the Bayes posterior predictive
under the synthetic task prior.

For sequential decision-making, however, the posterior predictive is not always
the desired uncertainty object. It mixes uncertainty about the latent task with
irreducible observation noise. To see this, define
\[
    m_\tau(x):=\mathbb E[Y\mid x,\tau],
    \qquad
    s_\tau^2(x):=\mathrm{Var}(Y\mid x,\tau).
\]
Applying the law of total variance with respect to the posterior over tasks gives
\begin{equation}
\label{eq:total_variance}
    \mathrm{Var}(Y_*\mid x_*,\mathcal D)
    =
    \underbrace{
    \mathrm{Var}_{\tau\mid\mathcal D}
    \!\left(m_\tau(x_*)\right)
    }_{\text{epistemic}}
    +
    \underbrace{
    \mathbb E_{\tau\mid\mathcal D}
    \!\left[s_\tau^2(x_*)\right]
    }_{\text{aleatoric}} .
\end{equation}
The first term captures uncertainty over plausible latent tasks and is reducible
by additional data. The second captures residual variability that remains even if
the task were known. Equation~\eqref{eq:total_variance} shows that the full task
prior can define an epistemic--aleatoric decomposition. Standard PFN training,
however, constrains only the marginal predictive distribution over \(y_*\), not
these two components separately. This creates the ambiguity formalized below.

\begin{proposition}[Non-identifiability of an additive signal--noise split]
\label{prop:non_identifiability}
Fix a query--context pair \((x_*,\mathcal D)\), and suppose the marginal
predictive distribution is
\[
    p(y_*\mid x_*,\mathcal D)=\mathcal N(m,s^2),
    \qquad s^2>0.
\]
For every \(a\in(0,s^2)\), there exist independent random variables
\[
    F_* \mid x_*,\mathcal D \sim \mathcal N(m,a),
    \qquad
    \varepsilon_* \sim \mathcal N(0,s^2-a),
    \qquad
    Y_* = F_*+\varepsilon_*,
\]
such that
\[
    Y_* \mid x_*,\mathcal D \sim \mathcal N(m,s^2).
\]
Consequently, even in this Gaussian additive setting, the same marginal
predictive distribution is compatible with a continuum of additive
decompositions into latent-signal variance~\(a\) and residual-noise
variance~\(s^2-a\). Thus, the marginal predictive distribution alone does not
identify a unique epistemic--aleatoric variance split.
\end{proposition}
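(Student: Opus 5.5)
The construction is explicit, so the plan is to build the two independent Gaussians directly and verify the marginal through closure of independent Gaussians under addition. Fix \(a\in(0,s^2)\). Since \(s^2-a>0\), both \(\mathcal N(m,a)\) and \(\mathcal N(0,s^2-a)\) are non-degenerate Gaussian laws. On a product probability space, conditionally on \((x_*,\mathcal D)\), I will take \(F_*\sim\mathcal N(m,a)\) and \(\varepsilon_*\sim\mathcal N(0,s^2-a)\) independent, and set \(Y_*:=F_*+\varepsilon_*\). The product-measure construction guarantees existence and independence, so nothing further is needed there.

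The key step is identifying the law of \(Y_*\). I would use characteristic functions, since independence makes them factor:
\[
\varphi_{Y_*}(t)=\varphi_{F_*}(t)\,\varphi_{\varepsilon_*}(t)
=\exp\!\bigl(imt-\tfrac12 a t^2\bigr)\exp\!\bigl(-\tfrac12 (s^2-a)t^2\bigr)
=\exp\!\bigl(imt-\tfrac12 s^2 t^2\bigr).
\]
This is the characteristic function of \(\mathcal N(m,s^2)\), and uniqueness (Lévy inversion) then gives \(Y_*\mid x_*,\mathcal D\sim\mathcal N(m,s^2)\). An equivalent route is to convolve the two densities and complete the square, but the characteristic-function computation is shorter.

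For the ``consequently'' part, I will note that the map \(a\mapsto(a,\,s^2-a)\) is injective on \((0,s^2)\). So distinct values of \(a\) give distinct pairs of latent-signal and noise variances, yet every pair yields the identical marginal \(\mathcal N(m,s^2)\). Any functional of the marginal predictive alone is therefore constant across this continuum, and cannot recover \(a\). In the notation of \eqref{eq:total_variance}, this means the epistemic term is not determined by \(p(y_*\mid x_*,\mathcal D)\).

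There is no real obstacle here. The only point needing care is interpretive: I must make clear that ``compatible'' means each such pair \((F_*,\varepsilon_*)\) is a valid additive model reproducing the given marginal exactly. Nothing in the marginal, which is the only object the objective \eqref{eq:standard_pfn_objective} constrains, distinguishes between these models.
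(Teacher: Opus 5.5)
Your proposal is correct and matches the paper's proof: you use the same explicit construction of independent $F_*\sim\mathcal N(m,a)$ and $\varepsilon_*\sim\mathcal N(0,s^2-a)$, and the same observation that different values of $a$ give different variance splits with an identical marginal. The only difference is that you verify via characteristic functions that a sum of independent Gaussians is Gaussian, whereas the paper simply cites that fact.
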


\begin{proof}
For any \(a\in(0,s^2)\), define \(F_*\) and \(\varepsilon_*\) as above. Since
the sum of independent Gaussian variables is Gaussian,
\[
    F_*+\varepsilon_*
    \sim
    \mathcal N(m+0,\;a+(s^2-a))
    =
    \mathcal N(m,s^2).
\]
However, the corresponding variance decomposition is
\[
    \mathrm{Var}(F_*\mid x_*,\mathcal D)=a,
    \qquad
    \mathrm{Var}(\varepsilon_*)=s^2-a.
\]
Different choices of \(a\) therefore induce different signal/noise variance
splits while yielding exactly the same marginal predictive distribution over
\(Y_*\). Hence the map from additive signal--noise decompositions to marginal
predictive laws is many-to-one, so the marginal predictive distribution alone
cannot identify the decomposition.
\end{proof}

Proposition~\ref{prop:non_identifiability} is pointwise and deliberately
minimal: it does not rule out identifying a decomposition from additional
assumptions, from structure tying the signal and noise processes across inputs,
or from direct supervision. Rather, it shows that a PFN trained only to match
\(p(y_*\mid x_*,\mathcal D)\) has no objective-level requirement to expose the
reducible and irreducible components separately. Our method supplies the missing
structure by defining the signal--noise split in the synthetic generator and
supervising both components during pretraining.

\section{Method}
\label{sec:method}

Motivated by Proposition~\ref{prop:non_identifiability}, we modify the PFN synthetic
task prior so that the signal--noise decomposition is specified explicitly during
pretraining. We sample augmented tasks
\[
    \tau = (p_\tau^x, f_\tau, \sigma_\tau^2),
\]
where \(p_\tau^x\) is an input distribution, \(f_\tau\) is a noiseless latent
signal, and \(\sigma_\tau^2\) is an observation-noise function. Observed targets
are generated as
\begin{equation}
\label{eq:signal_noise_model}
    y = f_\tau(x) + \varepsilon,
    \qquad
    \varepsilon \sim \mathcal N(0,\sigma_\tau^2(x)).
\end{equation}
Thus, the observation-level distribution \(p_\tau(y\mid x)\) is a derived
quantity, while the signal--noise decomposition is part of the generative model
itself.

The decomposition is therefore not recovered post hoc from the posterior
predictive distribution. Instead, the synthetic generator provides query-level
labels for the clean signal \(f_\tau(x_*)\) and the noise variance
\(\sigma_\tau^2(x_*)\), which are used as privileged auxiliary targets during
pretraining. The model itself receives only the noisy context
\(\mathcal D=\{(x_i,y_i)\}_{i=1}^n\) and the query input \(x_*\), both during
training and at test time. Before defining the binned architecture, we briefly specify the synthetic prior used to instantiate this supervised decomposition.

\paragraph{Synthetic heteroscedastic task prior.}
Tasks are sampled online during fine-tuning from an augmented prior that
separates the clean latent signal from observation noise. For each task, we first
sample a noiseless function \(f_\tau\) from a mixture of GP-style and
structural-causal priors, and then generate observations by adding Gaussian noise
according to Eq.~\eqref{eq:signal_noise_model}. The noise process is
heteroscedastic with high probability and homoscedastic otherwise. The resulting
clean query targets \(f_\tau(x_*)\) and noise variances
\(\sigma_\tau^2(x_*)\) are retained only as auxiliary training targets for the
decoupled models. Tuned and decoupled models are fine-tuned on the same
online-sampled task prior; exact sampler and optimization details are given in
Appendix~\ref{app:synthetic_pretraining}.

\subsection{Decoupled binned PFN}
\label{sec:bins_method}

We use fixed target bins
\[
    a_0 < a_1 < \cdots < a_K
\]
with centers
\[
    c_j = \frac{1}{2}(a_{j-1}+a_j), \qquad j=1,\dots,K.
\]
A standard PFN directly predicts observation-bin masses. Instead, we predict a
latent distribution over the noiseless signal and a separate aleatoric variance.
The epistemic head is formally a latent-signal head: it outputs logits
\(\ell^{(f)}\in\mathbb R^K\), defining
\begin{equation}
\label{eq:latent_categorical}
    \pi_{\theta,j}^{(f)}(x_*,\mathcal D)
    =
    q_\theta(f_*=c_j\mid x_*,\mathcal D)
    =
    \mathrm{softmax}(\ell^{(f)})_j .
\end{equation}
The uncertainty represented by this distribution is epistemic: it reflects
posterior uncertainty about the noiseless signal \(f_*\), rather than observation
noise.

The observation-level predictive distribution is obtained by passing the latent
signal distribution through the Gaussian noise model:
\begin{equation}
\label{eq:bins_predictive}
    p_k^{(y)}
    =
    \sum_{j=1}^K
    \pi_{\theta,j}^{(f)}(x_*,\mathcal D)
    T_{k\mid j}\!\left(\hat\sigma_\theta^2(x_*,\mathcal D)\right),
\end{equation}
where, writing $\hat\sigma_\theta
    =
    \sqrt{\hat\sigma_\theta^2(x_*,\mathcal D)}$,
\begin{equation}
\label{eq:bins_transition}
    T_{k\mid j}(\hat\sigma_\theta^2)
    =
    \Phi\!\left(\frac{a_k-c_j}{\hat\sigma_\theta}\right)
    -
    \Phi\!\left(\frac{a_{k-1}-c_j}{\hat\sigma_\theta}\right).
\end{equation}
Here \(T_{k\mid j}\) is the probability that a Gaussian centered at \(c_j\) with
variance \(\hat\sigma_\theta^2\) falls in observation bin \(k\), so the transition
matrix implements the binned analogue of convolving the latent signal distribution
with a Gaussian observation-noise model. In implementation, the transition
probabilities are computed over the finite bin range, clamped for numerical
stability, and renormalized to sum to one. This gives a truncated binned-Gaussian
approximation to the convolution while preserving the standard binned PFN
predictive interface and making the latent epistemic distribution explicit.

\subsection{Training objective}
\label{sec:bins_training}

For each synthetic query point, the generator provides the observed target \(y_*\),
the latent signal \(f_* = f_\tau(x_*)\), and the noise variance
\(\sigma_*^2 = \sigma_\tau^2(x_*)\). We train with three losses. The induced
observation distribution is trained with the usual binned negative log-likelihood:
\begin{equation}
\label{eq:Ly_bar}
    \mathcal L_y^{\mathrm{bar}}(\theta)
    =
    -\log p_{b(y_*)}^{(y)}
    +
    \log w_{b(y_*)},
\end{equation}
where \(b(y_*)\) is the bin containing \(y_*\) and \(w_k=a_k-a_{k-1}\). The latent
head is supervised using the bin containing the noiseless target:
\begin{equation}
\label{eq:Lf_cat}
    \mathcal L_f^{\mathrm{cat}}(\theta)
    =
    -\log
    \pi_{\theta,b(f_*)}^{(f)}(x_*,\mathcal D).
\end{equation}
The aleatoric head is supervised in log-variance space:
\begin{equation}
\label{eq:Lsigma_bar}
    \mathcal L_\sigma(\theta)
    =
    \left(
    \log \hat\sigma_\theta^2(x_*,\mathcal D)
    -
    \log \sigma_*^2
    \right)^2 .
\end{equation}
The full objective is

\begin{equation}
\label{eq:bins_full_objective}
    \mathcal L_{\mathrm{total}}(\theta)
    =
    \mathbb E_{\tau,\mathcal D,(x_*,y_*)}
    \biggl[
    \underbrace{\mathcal L_y^{\mathrm{bar}}}_{\textcolor{pfnpurple}{\text{predictive loss}}}
    +
    \lambda_f\,
    \underbrace{\mathcal L_f^{\mathrm{cat}}}_{\textcolor{epiteal}{\text{epistemic loss}}}
    +
    \lambda_\sigma\,
    \underbrace{\mathcal L_\sigma}_{\textcolor{aleaorange}{\text{aleatoric loss}}}
    \biggr].
\end{equation}

The three losses play distinct roles. The latent and aleatoric auxiliary losses
give the two heads their intended generator-defined semantics: the latent head is
trained to predict the clean signal, and the aleatoric head is trained to predict
the noise level. The observation loss is retained to ensure that the distribution
induced by convolving the latent head with the noise head remains a good predictive
distribution for the observed target \(y_*\), preserving the standard PFN
predictive interface.

For the auxiliary terms, the population minimizers have a simple interpretation.
The latent auxiliary loss recovers the conditional distribution of the discretized
clean signal \(B(f_\tau(x_*))\) given \((x_*,\mathcal D)\). Since
\(\mathcal L_\sigma\) is a squared loss in log-variance space, the aleatoric head
recovers the Bayes-optimal point predictor of log-variance, equivalently the
conditional geometric-mean summary
\[
\hat\sigma_\theta^2(x_*,\mathcal D)
=
\exp\left(
\mathbb E[\log \sigma_\tau^2(x_*)\mid x_*,\mathcal D]
\right).
\]
Thus, when there is posterior uncertainty over the noise function, the induced
observation-level predictive should be understood as a structured approximation
to the full Bayes predictive, rather than an exact marginalization over all
possible noise functions. Appendix~\ref{app:consistency} formalizes this
population-risk argument. At test time, epistemic uncertainty is computed from \(\pi_\theta^{(f)}\), for
example using
\[
    \mathrm{Var}_\theta(f_*)
    =
    \sum_{j=1}^K \pi_{\theta,j}^{(f)}c_j^2
    -
    \left(\sum_{j=1}^K \pi_{\theta,j}^{(f)}c_j\right)^2.
\]
Aleatoric uncertainty is given by the predicted noise variance
\(\hat\sigma_\theta^2(x_*,\mathcal D)\). Observation-level summaries, such as means and intervals, are computed from the induced predictive distribution \(p^{(y)}\).

\paragraph{Instantiation.}
We instantiate the method by adapting two public pretrained tabular foundation
models: TabICL~\citep{qu2025tabicl} and TabPFN v2.5~\citep{hollmann2025accurate}.
This gives four matched models. Decoupled-ICL (\textbf{Dec-ICL}) and
Decoupled-PFN (\textbf{Dec-PFN}) use the proposed latent-signal and aleatoric
heads, while Tuned-ICL and Tuned-PFN are observation-level tuned baselines with
the same pretrained backbones and 999-bin output discretization. The tuned
baselines are fine-tuned on the same synthetic task prior and optimization
protocol, but are trained only with the standard observation-level bar loss.
Thus, matched comparisons isolate the effect of the additional latent/noise
supervision and the resulting epistemic acquisition signal. Architecture,
partial-unfreezing, and loss-weight details are given in
Appendix~\ref{app:synthetic_pretraining}. 

\paragraph{Acquisition.} For BO and HPO, decoupled and tuned models use the same acquisition families, differing only in the uncertainty distribution used for acquisition. The main comparison uses log-Expected Improvement (LogEI): decoupled models compute it from the latent-signal moments \((\mu_f,v_{\mathrm{epi}})\), while tuned baselines compute it from the total observation-level predictive moments \((\mu_y,v_{\mathrm{tot}})\), since they do not expose a separate epistemic--aleatoric decomposition. Full acquisition details, including the closed-form Expected Improvement (EI) expression, lower confidence bound (LCB), and marginal Thompson-sampling (TS) ablations, are given in Appendix~\ref{app:acq_binned_pfn}.

\section{Related Work}
\label{sec:related_work}

\paragraph{Prior-fitted and tabular in-context predictors.}
We build on Prior-Fitted Networks (PFNs) and tabular in-context learning
\citep{muller2021transformers,nagler2023statistical,hollmann2022tabpfn,
hollmann2025accurate,qu2025tabicl}, and in particular PFN-based BO/HPO
surrogates such as PFNs4BO and freeze-thaw PFN methods
\citep{muller2023pfns4bo,rakotoarison2024context}. These works use PFNs as
posterior-predictive surrogates over observed targets. Our contribution is
orthogonal: we modify the synthetic pretraining objective and output
architecture so that the PFN exposes a latent-signal posterior separately from
observation noise.

\paragraph{Epistemic--aleatoric uncertainty and noisy acquisition.}
Separating epistemic from aleatoric uncertainty is standard in Bayesian deep
learning \citep{gal2016dropout,kendall2017uncertainties,
lakshminarayanan2017simple,depeweg2018decomposition} and central to acquisition
in active learning and Bayesian optimization
\citep{mackay1992information,houlsby2011bayesian,jones1998efficient,
srinivas2009gaussian}. Heteroscedastic GP and noisy BO methods model
input-dependent noise explicitly
\citep{kersting2007most,binois2018practical,cowen2022hebo}; we achieve an
analogous signal--noise separation inside a PFN via privileged supervision from
the synthetic generator.

\paragraph{BO/HPO baselines.} We compare against standard BO/HPO baselines, including GP-EI, BoTorch GP baselines, TPE, and random search \citep{jones1998efficient,balandat2020botorch,bergstra2011algorithms, bergstra2013making,akiba2019optuna}. These baselines provide practical points of comparison. The matched comparisons between Dec-ICL and Tuned-ICL, and between Dec-PFN and Tuned-PFN, serve a different purpose: they isolate whether acquisition improves when the surrogate exposes a latent epistemic posterior rather than only an observation-level predictive, while holding fixed the pretrained backbone, synthetic task prior, and acquisition family.

\section{Results}
We evaluate whether the decoupled latent-signal posterior provides a more useful
uncertainty signal for sequential decision-making. Our experiments cover three
settings: hyperparameter optimisation, synthetic Bayesian optimisation, and
active learning. The main text focuses on settings where the choice of
uncertainty signal directly affects the selected evaluations; observation-level
regression performance is reported in Appendix~\ref{app:regression}.

\paragraph{Main comparison and extended sweeps.}
The main text reports a compact comparison over eight core methods: Dec-ICL and
Dec-PFN, their matched observation-level tuned counterparts, Tuned-ICL and
Tuned-PFN, and four external baselines: a Gaussian-process surrogate with
expected improvement (GP-EI), a BoTorch GP baseline, TPE through Optuna, and
random search. For the PFN/ICL variants, the main sequential-optimisation
comparison uses log-Expected Improvement (LogEI). Decoupled models compute LogEI
from the latent-signal posterior, while tuned models compute LogEI from the total
observation-level predictive because they do not expose a separate epistemic
component. Appendix~\ref{app:acq_binned_pfn} defines the acquisition functions
for binned PFN outputs. The extended results in Appendices~\ref{app:hpo_full}
and~\ref{app:bo_full} report the full sweep over \(22\) method/acquisition
variants listed in Appendix~\ref{app:method_variants}, including TS, LCB,
EI/LogEI variants, total-variance ablations, multiple GP variants, TPE, and
random search.

Figure~\ref{fig:seqopt_summary} summarises the average-rank comparison. The
strongest gains from decoupling appear in hyperparameter optimisation, where
noisy and structured evaluations make the epistemic--aleatoric distinction
especially consequential. On synthetic benchmarks the decoupled methods remain
competitive, but the margins are smaller.

\begin{figure}[th]
    \centering
    \includegraphics[width=0.95\linewidth]{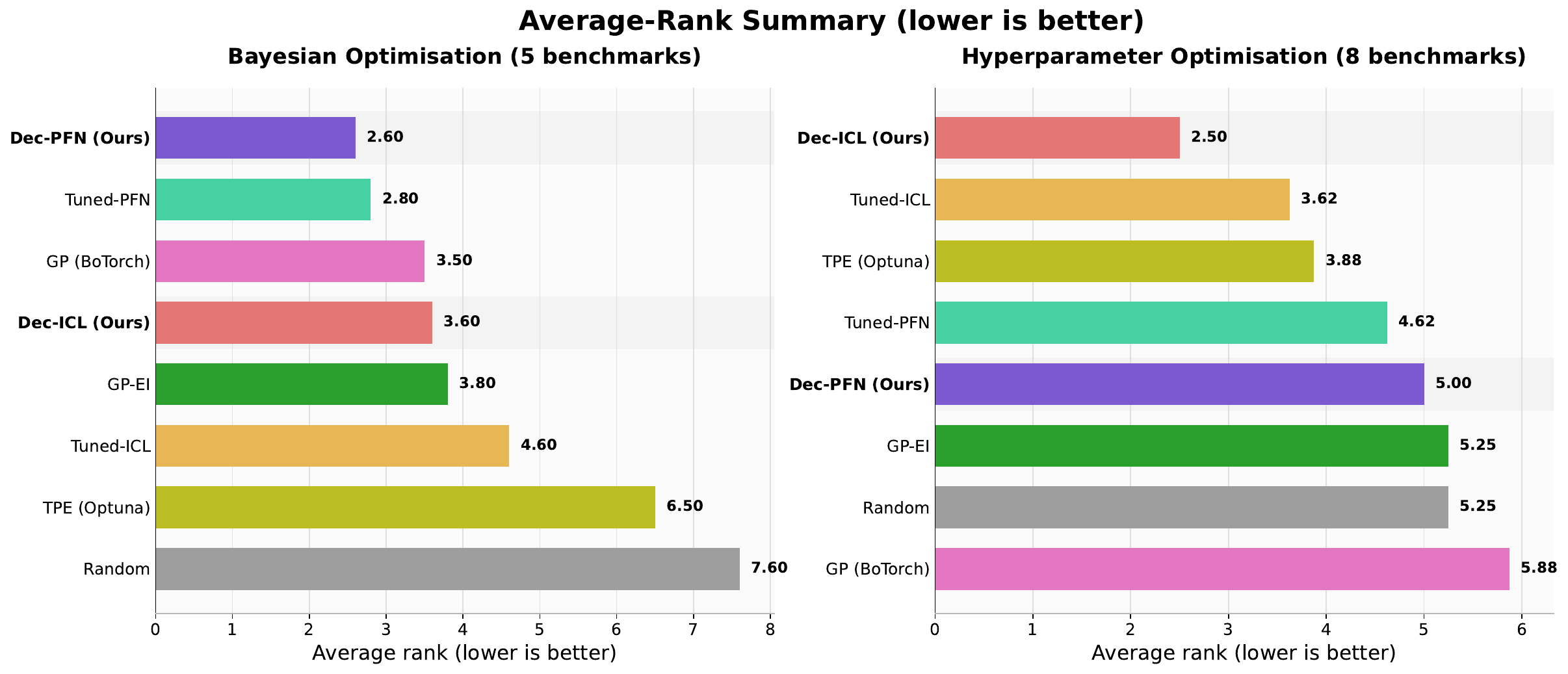}
\caption{
Average-rank summary across sequential optimisation benchmarks.
\emph{Left}: synthetic Bayesian optimisation (5 benchmarks).
\emph{Right}: hyperparameter optimisation (8 core benchmarks).
Ranks are computed among the eight core methods from the median final simple
regret over 10 seeds, with lower being better.
Decoupled methods use epistemic-only LogEI, while tuned counterparts use total-predictive LogEI.
Bold labels indicate our proposed methods.
}

    \label{fig:seqopt_summary}
\end{figure}

\subsection{Hyperparameter Optimisation}
\label{sec:hpo_results}

\begin{table*}[ht]
\centering
\caption{Matched hyperparameter optimisation results on the 8 core HPO benchmarks. Values are mean $\pm$ std of final simple regret after 100 BO steps over 10 seeds. Lower is better.}
\label{tab:hpo_decoupled_vs_tuned}
\begin{threeparttable}
\begin{adjustbox}{max width=\textwidth}
\begin{tabular}{llcccc}
\toprule
& & \multicolumn{2}{c}{TabICL Backbone} & \multicolumn{2}{c}{TabPFN Backbone} \\
\cmidrule(lr){3-4}\cmidrule(lr){5-6}
Family & Benchmark & \textbf{Decoupled (Ours)} & Tuned & \textbf{Decoupled (Ours)} & Tuned \\
\midrule
\multirow{3}{*}{LightGBM} & California & 0.0191 $\pm$ 0.0067 & \textbf{0.0171 $\pm$ 0.0056} & \textbf{0.0204 $\pm$ 0.0120} & 0.0220 $\pm$ 0.0095 \\
 & Kin8nm & \textbf{0.0091 $\pm$ 0.0029} & 0.0103 $\pm$ 0.0021 & 0.0106 $\pm$ 0.0039 & \textbf{0.0098 $\pm$ 0.0030} \\
 & Airlines & \textbf{0.0120 $\pm$ 0.0035} & 0.0136 $\pm$ 0.0023 & 0.0150 $\pm$ 0.0021 & \textbf{0.0132 $\pm$ 0.0035} \\
\midrule
\multirow{3}{*}{XGBoost} & California & \textbf{0.0286 $\pm$ 0.0123} & 0.0301 $\pm$ 0.0126 & 0.0373 $\pm$ 0.0114 & \textbf{0.0305 $\pm$ 0.0110} \\
 & Kin8nm & 0.0086 $\pm$ 0.0030 & \textbf{0.0073 $\pm$ 0.0023} & \textbf{0.0079 $\pm$ 0.0029} & 0.0090 $\pm$ 0.0048 \\
 & Airlines & \textbf{0.0221 $\pm$ 0.0047} & 0.0273 $\pm$ 0.0110 & \textbf{0.0264 $\pm$ 0.0055} & 0.0289 $\pm$ 0.0104 \\
\midrule
\multirow{1}{*}{Random Forest} & Diabetes & \textbf{1.0648 $\pm$ 0.4062} & 1.4968 $\pm$ 0.6464 & 1.4025 $\pm$ 0.4586 & \textbf{1.3891 $\pm$ 0.7633} \\
\midrule
\multirow{1}{*}{RBF-SVM} & Breast Cancer & 0.0081 $\pm$ 0.0035 & \textbf{0.0063 $\pm$ 0.0027} & \textbf{0.0081 $\pm$ 0.0020} & 0.0088 $\pm$ 0.0025 \\
\bottomrule
\end{tabular}
\end{adjustbox}
\end{threeparttable}
\end{table*}


\paragraph{Setup.} We construct 8 HPO benchmarks across four model families. LightGBM and XGBoost are tuned on California Housing, Kin8nm, and Airlines; Random Forest is tuned on Diabetes; and an RBF-SVM is tuned on Breast Cancer. The tree-based benchmarks are deliberately noisy: configurations are evaluated under stochastic train/validation splits, and the LightGBM/XGBoost objectives include input-dependent evaluation noise coupled to the subsampling parameter. This makes the setting suitable for testing whether acquisition benefits from separating reducible signal uncertainty from irreducible evaluation noise. The search spaces range from 2 to 4 dimensions and include learning rates, tree-complexity parameters, subsampling rates, regularization parameters, and kernel parameters. Each run performs \(100\) sequential evaluations under a common-seed protocol with up to \(10\) seeds per benchmark. Performance is measured by final simple regret and aggregated by average rank across benchmarks. Full search-space, noise-model, and evaluation-protocol details are given in Appendix~\ref{app:hpo_benchmarks}, and per-benchmark regret curves are given in Appendix~\ref{app:hpo_curves}. We also include TPE through Optuna~\citep{akiba2019optuna}, a strong practical baseline for HPO.

\paragraph{Results.} Figure~\ref{fig:seqopt_summary} (right) summarises the HPO rankings among the eight core methods in the main comparison. \textbf{Dec-ICL} achieves the best average rank ($2.50$), substantially outperforming its tuned counterpart ($3.62$) despite sharing the same pretrained backbone. This supports the central claim that, in noisy and structured HPO landscapes, acquisition functions benefit from using the latent-signal posterior rather than the total observation-level predictive uncertainty.

Table~\ref{tab:hpo_decoupled_vs_tuned} gives the matched LogEI comparison. Within the TabICL backbone, the decoupled model outperforms the tuned counterpart on 5 of the 8 benchmarks, with the largest gains on XGBoost/Airlines and Random Forest/Diabetes. These are settings where evaluation noise and train/validation variability can make total predictive uncertainty a less reliable guide for acquisition. TPE is competitive ($3.88$), as expected for a practical HPO baseline, but is still outranked on average by Dec-ICL. GP baselines rank lower overall (GP-EI: $5.25$; GP (BoTorch): $5.88$). Backbone choice also interacts with task structure. TabICL dominates on HPO, likely because its inductive bias is better matched to structured, often mixed-type search spaces. By contrast, PFN-backbone variants are less consistent on HPO, but perform more strongly on the smooth synthetic benchmarks in Section~\ref{sec:bo_synthetic}.

\subsection{Synthetic Bayesian Optimisation}
\label{sec:bo_synthetic}

\paragraph{Setup.} We evaluate on five standard synthetic benchmarks: Branin (2D), Hartmann (4D and 6D), and Ackley (2D, noiseless and noisy), running $100$ BO steps from a shared initial design across $10$ random seeds. PFN/ICL variants use log-Expected Improvement with gradient optimisation
(LogEI): decoupled models compute EI from the latent $f$-posterior, while tuned
models use the total observation-level predictive. GP baselines use EI-based
acquisition where applicable, while TPE and random search use their standard
policies. Baselines include GP-EI~\citep{jones1998efficient}, GP-EI via BoTorch~\citep{balandat2020botorch}, TPE~\citep{bergstra2011algorithms}, and random search. Per-benchmark regret curves and a full method comparison are given in Appendix~\ref{app:bo_curves} and~\ref{app:bo_full}.

\paragraph{Results.} Figure~\ref{fig:seqopt_summary} (left) summarises the synthetic BO rankings. \textbf{Dec-PFN} achieves the best average rank ($2.60$), with Tuned-PFN close behind ($2.80$). The margins between the top methods are tighter than on HPO, consistent with smooth, low-dimensional functions being driven as much by acquisition optimisation quality as by the uncertainty decomposition itself. Nonetheless, within each backbone the decoupled variant edges out its tuned counterpart: Dec-PFN ($2.60$) vs.\ Tuned-PFN ($2.80$), and Dec-ICL ($3.60$) vs.\ Tuned-ICL ($4.60$). GP (BoTorch) is competitive ($3.50$), as expected on smooth functions, while TPE ($6.50$) and Random ($7.60$) confirm that these benchmarks reward sample-efficient model-based search. As noted in Section~\ref{sec:hpo_results}, the backbone ranking reverses relative to HPO: the PFN backbone is strongest on smooth continuous functions, while TabICL leads on the structured HPO landscapes. Notably, Dec-PFN ($2.60$) outranks both GP-BoTorch ($3.50$) and TPE ($6.50$), showing that the decoupled PFN is competitive with dedicated BO surrogates even on the smooth, low-dimensional functions where GPs are traditionally strongest. The matched gains over tuned baselines are smaller than on HPO, consistent with these benchmarks being driven as much by acquisition optimisation quality as by the uncertainty decomposition.

\subsection{Active Learning}
\label{sec:active_learning}

\paragraph{Setup.} We evaluate pool-based active learning on five regression datasets. For each dataset, we sample an initial labelled set of size $n_0=1024$ and an unlabelled pool of size $5000$, then acquire $1024$ additional points sequentially, one at a time. We report test performance after all acquisitions, averaged over seeds. All methods use the same variance-based acquisition rule and differ only in the uncertainty signal: decoupled models use the latent $f$-posterior variance, while tuned baselines use total predictive variance. Full protocol details, including the fixed pool/test splits, seed counts, and acquisition definitions, are given in Appendix~\ref{app:active_learning}.

\begin{table*}[th]
\centering
\caption{
Active learning results on real datasets after the final acquisition step.
Entries are mean $\pm$ std over dataset-specific seeds; lower is better.
All methods use the same variance-based acquisition rule.
Boldface compares methods within each backbone family.
}
\label{tab:al_main_no_random}
\small
\setlength{\tabcolsep}{5pt}
\renewcommand{\arraystretch}{1.12}

\begin{threeparttable}
\begin{adjustbox}{max width=0.9\textwidth}
\begin{tabular}{llcccc}
\toprule
& & \multicolumn{2}{c}{TabICL Backbone} & \multicolumn{2}{c}{TabPFN Backbone} \\
\cmidrule(lr){3-4}\cmidrule(lr){5-6}
Dataset & Metric $\downarrow$ & \textbf{Decoupled (Ours)} & Tuned & \textbf{Decoupled (Ours)} & Tuned \\
\midrule

\multirow{3}{*}{California}
& RMSE
& \textbf{0.4772 $\pm$ 0.0035}
& 0.4783 $\pm$ 0.0033
& \textbf{{0.4788 $\pm$ 0.0042}}
& {0.4795 $\pm$ 0.0048} \\
& NLL
& \textbf{0.435 $\pm$ 0.027}
& 0.437 $\pm$ 0.028
& \textbf{{0.442 $\pm$ 0.030}}
& {0.446 $\pm$ 0.031} \\
& MAE
& \textbf{0.3055 $\pm$ 0.0025}
& 0.3064 $\pm$ 0.0029
& \textbf{{0.3061 $\pm$ 0.0028}}
& {0.3068 $\pm$ 0.0031} \\
\midrule

\multirow{3}{*}{Kin8nm}
& RMSE
& \textbf{0.0796 $\pm$ 0.0010}
& 0.0805 $\pm$ 0.0012
& \textbf{{0.0802 $\pm$ 0.0012}}
& {0.0810 $\pm$ 0.0014} \\
& NLL
& \textbf{-1.125 $\pm$ 0.016}
& -1.112 $\pm$ 0.020
& \textbf{{-1.118 $\pm$ 0.018}}
& {-1.105 $\pm$ 0.021} \\
& MAE
& \textbf{0.0626 $\pm$ 0.0008}
& 0.0632 $\pm$ 0.0010
& \textbf{{0.0630 $\pm$ 0.0009}}
& {0.0637 $\pm$ 0.0011} \\
\midrule

\multirow{3}{*}{NYC Taxi}
& RMSE
& \textbf{3.128 $\pm$ 0.014}
& 3.130 $\pm$ 0.016
& \textbf{{3.18 $\pm$ 0.03}}
& {3.20 $\pm$ 0.04} \\
& NLL
& 1.939 $\pm$ 0.020
& \textbf{1.930 $\pm$ 0.016}
& \textbf{{1.965 $\pm$ 0.025}}
& {1.980 $\pm$ 0.030} \\
& MAE
& \textbf{1.4261 $\pm$ 0.0075}
& 1.4285 $\pm$ 0.0061
& \textbf{{1.438 $\pm$ 0.010}}
& {1.445 $\pm$ 0.012} \\
\midrule

\multirow{3}{*}{Airlines}
& RMSE
& 1.9046 $\pm$ 0.0028
& \textbf{1.9040 $\pm$ 0.0037}
& \textbf{{1.9075 $\pm$ 0.0038}}
& {1.9082 $\pm$ 0.0042} \\
& NLL
& 2.0571 $\pm$ 0.0027
& \textbf{2.0569 $\pm$ 0.0030}
& \textbf{{2.0585 $\pm$ 0.0035}}
& {2.0592 $\pm$ 0.0038} \\
& MAE
& 1.6078 $\pm$ 0.0082
& \textbf{1.607 $\pm$ 0.011}
& \textbf{{1.613 $\pm$ 0.009}}
& {1.614 $\pm$ 0.010} \\
\midrule

\multirow{3}{*}{Year}
& RMSE
& 9.415 $\pm$ 0.031
& \textbf{9.413 $\pm$ 0.031}
& {9.430 $\pm$ 0.040}
& \textbf{{9.425 $\pm$ 0.042}} \\
& NLL
& 3.541 $\pm$ 0.010
& \textbf{3.5405 $\pm$ 0.0089}
& {3.548 $\pm$ 0.013}
& \textbf{{3.546 $\pm$ 0.014}} \\
& MAE
& 6.499 $\pm$ 0.055
& \textbf{6.494 $\pm$ 0.059}
& {6.520 $\pm$ 0.060}
& \textbf{{6.515 $\pm$ 0.062}} \\
\bottomrule
\end{tabular}
\end{adjustbox}
\end{threeparttable}
\end{table*}

\paragraph{Results.}
Table~\ref{tab:al_main_no_random} shows a consistent matched
advantage for the decoupled acquisition signal. Across both backbone families, decoupled models
improve over their tuned counterparts in \(7/10\) RMSE comparisons and \(20/30\)
metric-level comparisons. The effect is strongest for the TabPFN backbone, where
decoupling wins on four of five datasets across all reported metrics. Because
these comparisons share the same backbone family, synthetic prior, and
acquisition rule, the results support using latent epistemic uncertainty rather
than total predictive variance for noisy active learning.

\paragraph{When does decoupling help most?}
Across the three settings, epistemic-only acquisition helps most when
observation or evaluation noise is substantial relative to signal uncertainty,
as in noisy HPO and active learning on real regression datasets. The gains are
smaller on smooth, low-noise synthetic functions, where total and epistemic
uncertainty are more closely aligned. This matches the motivation of the method:
the decomposition matters most when total predictive variance is a poor proxy
for reducible uncertainty. The same pattern holds in the broader sweeps reported
in Appendices~\ref{app:hpo_full} and~\ref{app:bo_full}, where Dec-ICL-LogEI
ranks first among \(22\) HPO methods and Dec-PFN-LogEI ranks first among \(22\)
synthetic BO methods.

\section{Conclusion} 
We showed that a standard PFN posterior predictive does not, by itself, identify an epistemic--aleatoric decomposition. We address this within PFN pretraining by exploiting its synthetic task prior: the augmented generator provides privileged labels for both the noiseless signal and observation noise, yielding a generator-defined decomposition that transfers to real tasks as an inductive bias. Empirically, epistemic-only acquisition mitigates the failure mode of total-variance exploration in noisy settings. In matched comparisons, decoupled models usually improve over tuned counterparts, with the clearest gains in HPO; in broader sweeps over \(22\) method/acquisition variants, a decoupled model achieves the best average rank in both HPO (Dec-ICL-LogEI) and synthetic BO (Dec-PFN-LogEI). Gains are largest when observation noise is substantial relative to signal uncertainty and smaller on smooth low-noise functions; active-learning results are more modest but support the same pattern. Important directions include characterizing when synthetic-prior noise assumptions transfer reliably to real tasks, studying sensitivity to auxiliary loss weights and noise-prior design, and extending the framework beyond Gaussian observation noise to non-Gaussian likelihoods and structured observation models. More broadly, our results suggest that PFNs should be viewed not only as fast posterior-predictive approximators, but also as vehicles for learning decision-relevant uncertainty representations through the design of synthetic pretraining.
{
  \small
  \bibliographystyle{plainnat}
  \bibliography{references}
}

\newpage
\appendix

\addtocontents{toc}{\protect\setcounter{tocdepth}{2}}
\setcounter{tocdepth}{2}

\vspace{0.25em}
\section*{Appendix Table of Contents}
\tableofcontents
\clearpage

\section{Population consistency under squared log-variance loss}
\label{app:consistency}

We show that, under the squared log-variance loss in
Eq.~\eqref{eq:Lsigma_bar}, the auxiliary losses recover the Bayes-optimal latent
and noise targets under the augmented synthetic task prior. The result concerns the
auxiliary heads themselves; the observation-level loss in
\eqref{eq:bins_full_objective} additionally encourages the induced noisy-target
predictive distribution to match the standard PFN predictive interface.

Let
\[
    F_* := f_\tau(x_*),
    \qquad
    S_* := \sigma_\tau^2(x_*),
\]
and let \(B(F_*)\in\{1,\dots,K\}\) denote the bin containing the noiseless latent
target \(F_*\). The model predicts a categorical distribution
\(\pi_\theta^{(f)}(x_*,\mathcal D)\) over latent bins and a positive variance
\(\hat\sigma_\theta^2(x_*,\mathcal D)\).

\begin{proposition}[Population consistency of the auxiliary heads]
\label{prop:population_consistency}
Assume \(S_*>0\) almost surely and
\(\mathbb E[(\log S_*)^2]<\infty\). Assume also that the model class is rich
enough to represent arbitrary conditional categorical distributions over
\(B(F_*)\) and arbitrary measurable positive functions of \((x_*,\mathcal D)\).
For any \(\lambda_f,\lambda_\sigma>0\), every population minimizer of the
auxiliary objective
\[
    \mathbb E
    \left[
        \lambda_f
        \left(
        -\log \pi_{\theta,B(F_*)}^{(f)}(x_*,\mathcal D)
        \right)
        +
        \lambda_\sigma
        \left(
            \log \hat\sigma_\theta^2(x_*,\mathcal D)
            -
            \log S_*
        \right)^2
    \right]
\]
satisfies, for almost every \((x_*,\mathcal D)\),
\[
    \pi_{\theta,j}^{(f)}(x_*,\mathcal D)
    =
    \mathbb P(B(F_*)=j\mid x_*,\mathcal D),
    \qquad j=1,\dots,K,
\]
and
\[
    \log \hat\sigma_\theta^2(x_*,\mathcal D)
    =
    \mathbb E[\log S_*\mid x_*,\mathcal D].
\]
Equivalently,
\[
    \hat\sigma_\theta^2(x_*,\mathcal D)
    =
    \exp\left(
    \mathbb E[\log S_*\mid x_*,\mathcal D]
    \right).
\]
Thus, the latent head recovers the posterior distribution of the discretized
noiseless signal, while the aleatoric head recovers the Bayes-optimal
log-variance predictor under the augmented synthetic task prior.
\end{proposition}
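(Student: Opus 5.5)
The plan is to decouple the objective, then minimize each term pointwise after conditioning on $(x_*,\mathcal D)$. The model class is assumed rich enough that the latent head $\pi_\theta^{(f)}$ and the aleatoric head $\hat\sigma_\theta^2$ can be chosen as arbitrary, independent measurable functions of $(x_*,\mathcal D)$. The objective is the sum $\lambda_f\,\mathbb E[\ell_f]+\lambda_\sigma\,\mathbb E[\ell_\sigma]$, and each term depends on only one head. Since $\lambda_f,\lambda_\sigma>0$, a pair of heads minimizes the sum if and only if each head minimizes its own term. Finiteness of the minimal value is needed so that ``minimizer'' is meaningful:
\begin{itemize}
\item the latent term is at most $\log K$ (uniform $\pi$);
\item the variance term is finite by taking $\hat\sigma^2\equiv 1$ and using $\mathbb E[(\log S_*)^2]<\infty$.
\end{itemize}

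For the latent head, I will apply the tower property,
\[
\mathbb E\bigl[-\log\pi_{B(F_*)}\bigr]=\mathbb E\Bigl[\sum_{j}p_j(x_*,\mathcal D)\bigl(-\log\pi_j(x_*,\mathcal D)\bigr)\Bigr],
\qquad p_j:=\mathbb P(B(F_*)=j\mid x_*,\mathcal D),
\]
where the $p_j$ are taken from a regular conditional distribution. This is legitimate because $B(F_*)$ is finite-valued, so conditional probabilities exist as measurable functions. Pointwise, Gibbs' inequality gives
\[
\sum_j p_j\bigl(-\log\pi_j\bigr)=H(p)+\mathrm{KL}(p\,\|\,\pi)\ge H(p),
\]
with equality iff $\pi=p$. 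Hence $\pi^\star=p$ attains the infimum. Now suppose some other $\pi$ also attains the minimal value. Then $\mathbb E[\mathrm{KL}(p\|\pi)]=0$; here $H(p)\le\log K$ is integrable, so subtracting it is valid. It follows that $\mathrm{KL}(p\|\pi)=0$ almost surely, so $\pi=p$ almost surely.

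For the aleatoric head, I will set $g=\log\hat\sigma^2$. This is an arbitrary measurable real function, since positivity of $\hat\sigma^2$ corresponds bijectively to $g$ being real-valued. Write $\mu:=\mathbb E[\log S_*\mid x_*,\mathcal D]$, which exists in $L^2$ by assumption. The standard orthogonal decomposition is
\[
\mathbb E[(g-\log S_*)^2]=\mathbb E[(\log S_*-\mu)^2]+\mathbb E[(g-\mu)^2].
\]
The cross term vanishes by conditioning, since $g-\mu$ is $(x_*,\mathcal D)$-measurable. One subtlety is that $g$ may not be square integrable. If $\mathbb E[g^2]=\infty$, then the loss is infinite by the triangle inequality in $L^2$, so such a $g$ is not a minimizer and we may restrict to $g\in L^2$. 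The minimum is then attained exactly when $g=\mu$ almost surely. Exponentiating gives the stated geometric-mean form.

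The main obstacle is bookkeeping rather than a hard idea. One must justify that the joint minimizer separates, which relies on the independence of the two heads under the richness assumption. One must also check integrability: that $\mathbb E[\log S_*\mid\cdot]$ exists, and that infinite-loss candidates are excluded. Finally, uniqueness is only ``almost every $(x_*,\mathcal D)$'' with respect to the marginal law of $(x_*,\mathcal D)$ under the augmented prior. I would state these points explicitly and keep the argument at this pointwise-conditional level.
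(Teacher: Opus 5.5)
Your proposal is correct and takes the same route as the paper's proof. You separate the two heads using $\lambda_f,\lambda_\sigma>0$. You then treat the latent risk as a conditional cross-entropy, minimized exactly at $\mathbb P(B(F_*)=j\mid x_*,\mathcal D)$, and the log-variance risk as a squared loss, minimized by $\mathbb E[\log S_*\mid x_*,\mathcal D]$. Your extra bookkeeping makes rigorous the passage from population minimization to almost-everywhere equality, which the paper's ``condition on a fixed pair'' argument leaves implicit. That bookkeeping covers finiteness of the optimal risk, the $\mathbb E[\mathrm{KL}(p\|\pi)]=0$ step, and ruling out $g\notin L^2$.
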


\begin{proof}
Condition on a fixed pair \((x_*,\mathcal D)\). Since
\(\lambda_f,\lambda_\sigma>0\), the positive weights do not change the pointwise
minimizers of the two auxiliary risks.

First consider the latent head. The conditional latent auxiliary risk is
\[
    \mathbb E
    \left[
        -\log \pi_{B(F_*)}^{(f)}
        \mid x_*,\mathcal D
    \right].
\]
This is the cross-entropy between the true conditional distribution of
\(B(F_*)\) and the model distribution \(\pi^{(f)}\). Cross-entropy is minimized
when the model distribution equals the true conditional distribution. Hence, for
each bin \(j\),
\[
    \pi_{\theta,j}^{(f)}(x_*,\mathcal D)
    =
    \mathbb P(B(F_*)=j\mid x_*,\mathcal D).
\]

For the aleatoric head, write
\[
    h_\theta(x_*,\mathcal D)
    :=
    \log \hat\sigma_\theta^2(x_*,\mathcal D).
\]
The conditional log-variance risk is
\[
    \mathbb E
    \left[
        \left(
            h_\theta(x_*,\mathcal D)
            -
            \log S_*
        \right)^2
        \mid x_*,\mathcal D
    \right].
\]
The squared-error risk is minimized by the conditional mean of the target.
Therefore,
\[
    h_\theta(x_*,\mathcal D)
    =
    \mathbb E[\log S_*\mid x_*,\mathcal D],
\]
which gives
\[
    \hat\sigma_\theta^2(x_*,\mathcal D)
    =
    \exp\left(
    \mathbb E[\log S_*\mid x_*,\mathcal D]
    \right).
\]
Since the argument holds pointwise for almost every \((x_*,\mathcal D)\), it holds
at the population optimum.
\end{proof}

\begin{corollary}[Epistemic uncertainty from the latent head]
\label{cor:epistemic_consistency}
Let \(\widetilde F_* := c_{B(F_*)}\) be the discretized latent signal obtained by
replacing each latent value by the center of its bin. Under
Proposition~\ref{prop:population_consistency}, the variance computed from the
latent categorical distribution,
\[
    \sum_{j=1}^K \pi_{\theta,j}^{(f)}c_j^2
    -
    \left(
    \sum_{j=1}^K \pi_{\theta,j}^{(f)}c_j
    \right)^2,
\]
equals
\[
    \mathrm{Var}(\widetilde F_*\mid x_*,\mathcal D).
\]
As the maximum bin width tends to zero, this quantity converges to
\[
    \mathrm{Var}(F_*\mid x_*,\mathcal D)
\]
under standard integrability assumptions and ignoring boundary truncation effects.
\end{corollary}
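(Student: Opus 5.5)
The corollary has two parts. The first is an exact identity at the population optimum. The second is a limit as the bins shrink, which I will handle with dominated convergence.

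\textbf{Step 1 (exact identity).} By Proposition~\ref{prop:population_consistency}, for almost every \((x_*,\mathcal D)\) we have \(\pi_{\theta,j}^{(f)}=\mathbb P(B(F_*)=j\mid x_*,\mathcal D)\). By definition \(\widetilde F_*=c_{B(F_*)}\), so \(\widetilde F_*\) takes the value \(c_j\) exactly on the event \(\{B(F_*)=j\}\). Its conditional law given \((x_*,\mathcal D)\) is therefore the discrete distribution placing mass \(\pi_{\theta,j}^{(f)}\) at \(c_j\), with bins treated as labels so that repeated centers are harmless. It follows that
\[
\mathbb E[\widetilde F_*^k\mid x_*,\mathcal D]=\sum_{j}\pi^{(f)}_{\theta,j}c_j^k, \qquad k=1,2.
\]
Substituting these into \(\mathrm{Var}=\mathbb E[\cdot^2]-(\mathbb E[\cdot])^2\) gives the claimed identity. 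This step is routine.

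\textbf{Step 2 (the limit).} Consider a sequence of bin grids \(a^{(m)}\) whose maximum width \(\delta_m\) tends to \(0\). "Ignoring boundary truncation" I take to mean one of two things: either \(F_*\in[a_0^{(m)},a_K^{(m)}]\) almost surely, or the grid range expands to cover \(\mathbb R\) and out-of-range values are sent to the nearest edge bin. Under either reading, whenever \(F_*\) lies inside the range, \(F_*\) and \(c_{B(F_*)}\) sit in the same bin, so
\[
|\widetilde F_*^{(m)}-F_*|\le \delta_m/2.
\]
Hence \(\widetilde F_*^{(m)}\to F_*\) pointwise, with a uniform bound on the error. To pass to conditional moments I assume \(\mathbb E[F_*^2\mid x_*,\mathcal D]<\infty\); this is the "standard integrability" hypothesis. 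Then:
\begin{itemize}
\item For the first moment, \(|\mathbb E[\widetilde F_*^{(m)}-F_*\mid\cdot]|\le\delta_m/2\).
\item For the second moment, write \(\widetilde F^2-F^2=(\widetilde F-F)(\widetilde F+F)\). Its conditional expectation is bounded by \(\tfrac{\delta_m}{2}\bigl(2\,\mathbb E[|F_*|\mid\cdot]+\delta_m/2\bigr)\), which tends to \(0\).
\end{itemize}
Combining the two, \(\mathrm{Var}(\widetilde F_*^{(m)}\mid\cdot)\to\mathrm{Var}(F_*\mid\cdot)\).

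\textbf{Main obstacle.} The delicate point is making "ignoring boundary truncation" precise. If the grid has a fixed finite range, values of \(F_*\) outside it are clamped, and the error is not uniformly small there. I would first state the result assuming bounded support of \(F_*\) inside the grid. Then, for expanding grids, I would control the tail contribution by dominated convergence: the bound \(|\widetilde F_*|\le|F_*|+\delta_m\) gives the integrable dominating function \((|F_*|+1)^2\), using conditional square integrability.

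One further subtlety is that Proposition~\ref{prop:population_consistency} is applied separately for each grid \(m\). The exceptional null set of \((x_*,\mathcal D)\) could therefore depend on \(m\). A countable union of null sets is still null, so the convergence holds almost everywhere.
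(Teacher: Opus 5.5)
Your proposal is correct. The paper gives no written proof of this corollary. It is stated directly after the proof of Proposition~\ref{prop:population_consistency} as an immediate consequence, and the bin-width limit is left as an informal remark, hedged by ``standard integrability assumptions'' and ``ignoring boundary truncation effects.''

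Your Step 1 is exactly the implicit argument: the latent categorical is the conditional law of $\widetilde F_*$, so its moment formula equals $\mathrm{Var}(\widetilde F_*\mid x_*,\mathcal D)$. Step 2 supplies what the paper omits. You make the boundary hedge precise through your two readings. You give the coupling bound $|\widetilde F_*-F_*|\le\delta_m/2$ on the in-range event and control both moments. You use dominated convergence with $(|F_*|+1)^2$ for expanding grids. You also handle the $m$-dependent exceptional null sets by a countable union, which the paper never addresses.

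Two small tightenings are worth making:
\begin{itemize}
\item The sentence claiming a uniform error bound ``under either reading'' is true only on the event that $F_*$ lies in the grid range. You correct this in your ``main obstacle'' paragraph, but as written it overstates the case.
\item The dominating bound $|\widetilde F_*|\le|F_*|+\delta_m$ for clamped values can fail when the grid does not straddle $0$. It does hold once $a_0^{(m)}<0<a_K^{(m)}$, which is true for all sufficiently large $m$ when the range expands to $\mathbb R$. Eventual domination is all that dominated convergence needs.
\end{itemize}
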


\section{Synthetic Pretraining and Implementation Details}
\label{app:synthetic_pretraining}

\subsection{Synthetic task prior}
\label{app:synthetic_task_prior}

All four models are fine-tuned on synthetic tasks sampled online. There is no
fixed finite synthetic dataset; instead, each optimization step draws a fresh
batch of tasks from the same underlying prior. Thus, if a model is fine-tuned
for \(T\) gradient steps with batch size \(B\), it sees \(BT\) synthetic tasks,
each with \(24\) query targets and a variable-size context set.

For each synthetic task, we sample:
\begin{itemize}[leftmargin=*]
    \item an input dimension \(d \in \{1,\dots,16\}\),
    \item a total sequence length \(n \in \{25,\dots,256\}\),
    \item a split in which the final \(24\) points are treated as queries and
    the remaining \(n-24\) points form the context set.
\end{itemize}

We then sample a noiseless latent function \(f\) from a mixture prior. With
probability \(0.2\), \(f\) is drawn from an RBF Gaussian-process prior
approximated by random Fourier features. With the remaining probability \(0.8\),
\(f\) is drawn from a structural-causal-model prior. Within this SCM branch, an
MLP-SCM is chosen with probability \(0.7\) and a Tree-SCM with probability
\(0.3\). Equivalently, the overall latent-function mixture is \(20\%\) GP,
\(56\%\) MLP-SCM, and \(24\%\) Tree-SCM.

For the SCM branch, we use the same sampled-hyperparameter mechanism as the
TabICL synthetic prior, but force the latent function itself to be noise-free at
generation time. In particular, the SCM prior generates only the clean latent
signal \(f(x)\); observation noise is introduced afterward by our separate noise
model. This is the key augmentation relative to standard PFN training.

After sampling \(f\), we normalize the input features independently within the
task. We then sample a base observation-noise standard deviation
\[
    s_0 \sim \mathrm{Unif}(0.03,0.12).
\]
With probability \(0.8\), the task is heteroscedastic. In that case, we choose a
single input coordinate and construct a smooth positive modulation of \(s_0\)
using a random combination of:
\begin{itemize}[leftmargin=*]
    \item a sigmoidal trend term,
    \item a local Gaussian bump term,
    \item a sinusoidal variation term,
    \item and a positive floor.
\end{itemize}
This yields an input-dependent noise scale \(\sigma(x)\), which is clipped below
by a small positive constant for numerical stability. With probability \(0.2\),
the task is homoscedastic and \(\sigma(x)\equiv s_0\).

Noisy observations are then generated as
\[
    y = f(x) + \varepsilon,
    \qquad
    \varepsilon \sim \mathcal N(0,\sigma^2(x)).
\]
After generating \(y\), we normalize the observed targets using the mean and
standard deviation of the context targets. The same affine transformation is
applied to the latent clean targets, and the variances are rescaled by the square
of the context standard deviation. This yields the normalized privileged query
targets \(f_\tau(x_*)\) and \(\sigma_\tau^2(x_*)\) used during training.

The tuned baselines and the decoupled models are trained on exactly the same
online-sampled synthetic tasks. The only difference is that the decoupled models
add the auxiliary latent-signal and noise-level supervision, whereas the tuned
baselines optimize only the standard observation-level predictive loss.

\subsection{Shared fine-tuning protocol}
\label{app:fine_tuning_protocol}

All four checkpoints used in the paper---Dec-ICL, Tuned-ICL, Dec-PFN, and
Tuned-PFN---are obtained by fine-tuning public pretrained backbones rather than
by training new transformers from scratch. Dec-ICL and Tuned-ICL start from the
public TabICL regression checkpoint~\citep{qu2025tabicl}, while Dec-PFN and
Tuned-PFN start from the public TabPFN regressor v2.5
checkpoint~\citep{hollmann2025accurate}. In every case, we keep the backbone
frozen except for the final two transformer layers, which are unfrozen during
fine-tuning. The tuned baselines are matched to the decoupled models at the level of
architecture adaptation and optimization protocol. In particular, they use the
same 999-bin output discretization, the same synthetic task sampler, and the
same partial-unfreezing strategy. The only architectural difference is that the
decoupled models add a latent head and a noise head, whereas the tuned baselines
replace the original decoder with a single observation-level 999-bin head and
optimize only the standard bar-distribution likelihood.

\subsection{Optimization details}
\label{app:optimization_details}

For all models we use AdamW with separate learning rates for the output head(s)
and the unfrozen backbone layers. The head learning rate is \(3\times 10^{-4}\),
the backbone learning rate is \(10^{-5}\), the weight decay is \(10^{-5}\), and
the gradient norm is clipped at \(5.0\). We use a short warmup followed by a
cosine schedule. For TabICL-based models we use batch size \(32\); for
TabPFN-based models we use batch size \(16\).

For the decoupled models, the training objective is
\[
\mathcal{L}
=
\lambda_y \mathcal{L}_y^{\mathrm{bar}}
+
\lambda_f \mathcal{L}_f^{\mathrm{cat}}
+
\lambda_\sigma \mathcal{L}_\sigma,
\]
with \(\lambda_y=1\), \(\lambda_f=1\), and \(\lambda_\sigma=0.1\). For the
tuned baselines, we optimize only the standard observation-level bar loss
\(\mathcal{L}_y^{\mathrm{bar}}\).

\section{Acquisition Functions for Binned PFN Surrogates}
\label{app:acq_binned_pfn}

This appendix specifies how we construct acquisition functions from the
\emph{binned} PFN outputs used in our experiments. The main-text BO and HPO
results use the gradient-based LogEI variant described below. Marginal
candidate-set Thompson sampling (TS) and lower confidence bound (LCB) are included for completeness and
correspond to the additional ablations reported in the appendix.

\subsection{Moment summaries from the binned outputs}
\label{app:acq_moments}

Our decoupled model outputs a latent categorical distribution
\(\pi^{(f)}(x_*,\mathcal D)\) over the clean signal and a separate aleatoric
variance \(\hat\sigma^2(x_*,\mathcal D)\). The tuned single-head model outputs
only observation-level bin masses \(p^{(y)}(x_*,\mathcal D)\). In both cases,
acquisition is based on moment summaries of these binned distributions.

For the decoupled model, the latent posterior mean and epistemic variance are
\begin{align}
\mu_f(x_*) &=
\sum_{j=1}^K \pi^{(f)}_j(x_*,\mathcal D)\,c_j, \\
v_{\mathrm{epi}}(x_*) &=
\sum_{j=1}^K \pi^{(f)}_j(x_*,\mathcal D)\,c_j^2
- \mu_f(x_*)^2 ,
\end{align}
where \(c_j\) denotes the center of bin \(j\). The induced observation-level
distribution \(p^{(y)}\) is obtained by convolving \(\pi^{(f)}\) with the
predicted Gaussian noise model, as in Eq.~\eqref{eq:bins_predictive}. Its mean
can be written as
\begin{equation}
\mu_y(x_*) =
\sum_{k=1}^K p_k^{(y)}(x_*,\mathcal D)\,c_k .
\end{equation}
In our implementation, the total predictive variance used for total-uncertainty
logging and ablations is
\begin{equation}
v_{\mathrm{tot}}(x_*) =
v_{\mathrm{epi}}(x_*) + \hat\sigma^2(x_*,\mathcal D),
\end{equation}
up to small numerical floors used for stability. This is the natural
signal-plus-noise variance decomposition induced by the decoupled model, ignoring
small boundary effects from finite bin truncation.

For the tuned single-head model, only the observation-level moments are
available:
\begin{align}
\mu_y(x_*) &=
\sum_{k=1}^K p_k^{(y)}(x_*,\mathcal D)\,c_k, \\
v_{\mathrm{tot}}(x_*) &=
\sum_{k=1}^K p_k^{(y)}(x_*,\mathcal D)\,c_k^2
- \mu_y(x_*)^2 .
\end{align}

Thus, the decoupled model exposes both a latent epistemic summary
\((\mu_f,v_{\mathrm{epi}})\) and an observation-level summary
\((\mu_y,v_{\mathrm{tot}})\), whereas the tuned model exposes only the latter.

\paragraph{Practical approximation.}
Although the underlying PFN output is binned rather than Gaussian, we construct
the BO and HPO acquisitions from the first two moments above. In other words, we
use a Gaussian moment-matched approximation to either the latent posterior
(decoupled) or the observation-level predictive distribution (tuned). This keeps
the acquisition rules simple and places tuned and decoupled models on the same
footing.

\subsection{Candidate-set marginal Thompson sampling and LCB}
\label{app:acq_candidate}

For candidate-set methods, we first construct a finite candidate set
\(\mathcal X_t^{\mathrm{cand}}\) at BO step \(t\), and then evaluate the
acquisition on each unqueried candidate. Since all our objectives are
minimization problems, the next query is chosen as
\[
x_{t+1}
\in
\arg\min_{x \in \mathcal X_t^{\mathrm{cand}}}
\alpha_t(x).
\]

\paragraph{Epistemic marginal Thompson sampling.}
For the decoupled model we define a candidate-set Thompson-style score by
drawing
\[
\widetilde f_t(x)
\sim
\mathcal N\!\bigl(\mu_f(x),\,v_{\mathrm{epi}}(x)\bigr),
\]
independently for each candidate \(x\), and selecting
\[
x_{t+1}
\in
\arg\min_{x \in \mathcal X_t^{\mathrm{cand}}}
\widetilde f_t(x).
\]
This uses only reducible uncertainty about the latent signal. Since the PFN
outputs marginal predictive distributions at queried candidates rather than an
explicit joint posterior over functions, this should be understood as a
marginal Thompson-sampling approximation rather than a coherent global function
draw.

\paragraph{Total marginal Thompson sampling.}
For the tuned model, and for total-uncertainty ablations, we instead draw
\[
\widetilde y_t(x)
\sim
\mathcal N\!\bigl(\mu_y(x),\,v_{\mathrm{tot}}(x)\bigr),
\]
independently for each candidate \(x\), and choose the candidate with the
smallest sampled value.

\paragraph{Lower confidence bound.}
The corresponding LCB rules are
\begin{align}
\alpha^{\mathrm{epi}}_{\mathrm{LCB}}(x)
&=
\mu_f(x) - \beta \sqrt{\max\{v_{\mathrm{epi}}(x),\epsilon_v\}}, \\
\alpha^{\mathrm{tot}}_{\mathrm{LCB}}(x)
&=
\mu_y(x) - \beta \sqrt{\max\{v_{\mathrm{tot}}(x),\epsilon_v\}},
\end{align}
with \(\beta > 0\) and a small numerical floor \(\epsilon_v>0\). The decoupled
version uses the latent epistemic moments, while the tuned version uses the
total predictive moments.

\subsection{Gradient-based LogEI}
\label{app:acq_logei}

The main BO and HPO comparisons in the paper use a gradient-based
log-expected-improvement acquisition. Since the PFN outputs binned predictive
distributions, we first form a Gaussian moment-matched approximation using the
means and variances in Appendix~\ref{app:acq_moments}. We then compute the
standard closed-form expected improvement under this Gaussian approximation.
For continuous or relaxed search spaces, the resulting differentiable
acquisition is optimized with multi-start gradient-based optimization.

\paragraph{Decoupled epistemic LogEI.}
For the decoupled model, we define the incumbent in latent space as the best
latent posterior mean among the already evaluated points,
\begin{equation}
\tau_t^{\mathrm{epi}}
=
\min_{1 \le i \le t}
\mu_f(x_i).
\end{equation}
We then approximate the latent posterior at a candidate \(x\) by
\[
F(x)\mid \mathcal D_t
\approx
\mathcal N\!\bigl(\mu_f(x),\,v_{\mathrm{epi}}(x)\bigr).
\]
Let
\[
s_{\mathrm{epi}}(x)
=
\sqrt{\max\{v_{\mathrm{epi}}(x),\epsilon_v\}},
\]
where \(\epsilon_v>0\) is a small numerical floor. The resulting epistemic
expected improvement is
\begin{equation}
\mathrm{EI}^{\mathrm{epi}}_t(x)
=
\mathbb E\!\left[
(\tau_t^{\mathrm{epi}} - F(x))_+
\mid \mathcal D_t
\right],
\end{equation}
which has the Gaussian closed form
\begin{equation}
\mathrm{EI}^{\mathrm{epi}}_t(x)
=
(\tau_t^{\mathrm{epi}}-\mu_f(x))\,
\Phi(z_t(x))
+
s_{\mathrm{epi}}(x)\,
\phi(z_t(x)),
\end{equation}
where
\begin{equation}
z_t(x)
=
\frac{\tau_t^{\mathrm{epi}}-\mu_f(x)}
{s_{\mathrm{epi}}(x)}.
\end{equation}
In practice we optimize
\[
\log\!\left(\mathrm{EI}^{\mathrm{epi}}_t(x)+\epsilon_{\mathrm{EI}}\right)
\]
instead of \(\mathrm{EI}^{\mathrm{epi}}_t(x)\) directly, with
\(\epsilon_{\mathrm{EI}}>0\) used for numerical stability.

\paragraph{Tuned total LogEI.}
For the tuned model, the same construction is applied to the observation-level
predictive moments:
\begin{align}
\tau_t^{\mathrm{tot}}
&=
\min_{1 \le i \le t} \mu_y(x_i), \\
Y(x)\mid \mathcal D_t
&\approx
\mathcal N\!\bigl(\mu_y(x),\,v_{\mathrm{tot}}(x)\bigr).
\end{align}
Let
\[
s_{\mathrm{tot}}(x)
=
\sqrt{\max\{v_{\mathrm{tot}}(x),\epsilon_v\}}.
\]
Then
\[
\tilde z_t(x)
=
\frac{\tau_t^{\mathrm{tot}}-\mu_y(x)}
{s_{\mathrm{tot}}(x)}
\]
and
\begin{equation}
\mathrm{EI}^{\mathrm{tot}}_t(x)
=
(\tau_t^{\mathrm{tot}}-\mu_y(x))\,
\Phi(\tilde z_t(x))
+
s_{\mathrm{tot}}(x)\,
\phi(\tilde z_t(x)).
\end{equation}
Again, we optimize
\[
\log\!\left(\mathrm{EI}^{\mathrm{tot}}_t(x)+\epsilon_{\mathrm{EI}}\right).
\]

\paragraph{Optimization details.}
For the gradient-based variants, we keep input normalization fixed to the
context statistics during the inner acquisition optimization. This avoids
introducing artificial gradients through a query-dependent normalization map.
We first score a Sobol set of raw candidates, keep the best few starting points,
and then refine them with multi-start L-BFGS-B. For bounded continuous variables,
the optimizer is constrained to the search domain. For integer or discrete
hyperparameters, optimization is performed in the relaxed representation and the
selected point is mapped back to a valid configuration before evaluation. The
selected query is the point with the largest LogEI value after local
optimization.

\paragraph{Interpretation.}
The key distinction is therefore simple:
\begin{itemize}[leftmargin=*]
\item the \emph{decoupled} acquisition rules operate on the latent epistemic
summary \((\mu_f,v_{\mathrm{epi}})\), ignoring aleatoric noise during query
selection;
\item the \emph{tuned} acquisition rules operate on the total predictive summary
\((\mu_y,v_{\mathrm{tot}})\), since no separate epistemic--aleatoric split is
available.
\end{itemize}
This is exactly the comparison studied in the main BO and HPO experiments.

\section{HPO Benchmark Details}
\label{app:hpo_benchmarks}

Table~\ref{tab:hpo_benchmarks} summarises the 8 core HPO benchmarks.
Each evaluation uses a random $80/20$ train/validation split (5-fold CV for SVM),
making observations inherently noisy.
For LightGBM and XGBoost the noise level is heteroscedastic and coupled to the
subsampling parameter: $\sigma = 0.02 + c\cdot\max(0,\,(0.65-s)/0.35)$,
with $c=0.12$ (LightGBM, $s=$ bagging fraction) and $c=0.15$ (XGBoost, $s=$ subsample).
All tabular datasets are subsetted to at most $3{,}000$ examples and standardised. 
\begin{table}[ht]
\centering
\caption{Core HPO benchmarks and search spaces.}
\label{tab:hpo_benchmarks}
\small
\renewcommand{\arraystretch}{1.08}
\setlength{\tabcolsep}{5pt}

\begin{threeparttable}

\begin{tabular}{@{}llccc@{}}
\toprule
\textbf{Model} & \textbf{Dataset} & \textbf{Dim} & \textbf{Objective} & \textbf{Noise} \\
\midrule
\multirow{3}{*}{LightGBM}
  & California Housing & 4 & RMSE & heteroscedastic \\
  & Kin8nm             & 4 & RMSE & heteroscedastic \\
  & Airlines           & 4 & RMSE & heteroscedastic \\
\midrule
\multirow{3}{*}{XGBoost}
  & California Housing & 4 & RMSE & heteroscedastic \\
  & Kin8nm             & 4 & RMSE & heteroscedastic \\
  & Airlines           & 4 & RMSE & heteroscedastic \\
\midrule
Random Forest & Diabetes      & 2 & RMSE            & split noise \\
RBF-SVM       & Breast Cancer & 2 & $1-\mathrm{Acc}$ & CV noise \\
\bottomrule
\end{tabular}

\vspace{0.4em}

\footnotesize
\begin{tabularx}{\linewidth}{@{}lX@{}}
\toprule
\textbf{Model} & \textbf{Search space} \\
\midrule
LightGBM &
learning rate $\in [2^{-6},1]$; num\_leaves $\in [8,256]$; bagging\_frac $\in [0.3,1]$; min\_child\_samples $\in [5,100]$. \\

XGBoost &
learning rate $\in [2^{-4},1]$; max\_depth $\in [2,10]$; n\_estimators $\in [16,512]$; subsample $\in [0.3,1]$. \\

Random Forest &
n\_estimators $\in [16,512]$; max\_depth $\in [2,20]$. \\

RBF-SVM &
$C \in [2^{-5},2^{15}]$; $\gamma \in [2^{-15},2^{3}]$. \\
\bottomrule
\end{tabularx}

\end{threeparttable}
\end{table}

\section{Extended Sequential Optimisation Results}
\label{app:bo_extended}

This section provides the full per-benchmark regret curves and complete
method comparisons that complement the summary figures in the main text.
All results use the \(10\)-seed protocol (seeds \(0\)--\(9\)) described
in Section~\ref{sec:bo_synthetic}.

\subsection{Method variants included in the extended sweeps}
\label{app:method_variants}

The main text reports a compact comparison over eight core methods for
readability. The extended HPO and synthetic BO summaries include the full sweep
over \(22\) method/acquisition variants, covering LogEI, EI, TS, LCB,
total-variance ablations, multiple GP variants, TPE, and random search.
Tables~\ref{tab:hpo_core_all_methods_appendix}
and~\ref{tab:bo_all_methods_appendix} report the corresponding final simple
regret values and average ranks for the HPO and synthetic BO sweeps,
respectively. Method names indicate the surrogate family and acquisition rule;
suffixes such as ``Total'' denote ablations that use total predictive uncertainty
rather than the latent epistemic signal.

\begin{table*}[t]
\centering
\caption{Core HPO appendix results for all method/acquisition variants in the extended sweep.
Methods are ordered by average rank.}
\label{tab:hpo_core_all_methods_appendix}
\scriptsize
\setlength{\tabcolsep}{3.2pt}
\renewcommand{\arraystretch}{1.08}
\begin{threeparttable}
\begin{adjustbox}{max width=\textwidth}
\begin{tabular}{llccccccccc}
\toprule
Family & Method & \makecell{LGBM\\Cal.} & \makecell{LGBM\\Kin8nm} & \makecell{LGBM\\Airlines} & \makecell{XGB\\Cal.} & \makecell{XGB\\Kin8nm} & \makecell{XGB\\Airlines} & \makecell{RF\\Diabetes} & \makecell{SVM\\Breast} & Avg. rank \\
\midrule
Dec-ICL & \textbf{Dec-ICL-LogEI} & \makecell[r]{0.0191\\{\scriptsize$\pm$ 0.0067}} & \makecell[r]{0.0091\\{\scriptsize$\pm$ 0.0029}} & \makecell[r]{0.0120\\{\scriptsize$\pm$ 0.0035}} & \makecell[r]{0.0286\\{\scriptsize$\pm$ 0.0123}} & \makecell[r]{0.0086\\{\scriptsize$\pm$ 0.0030}} & \makecell[r]{0.0221\\{\scriptsize$\pm$ 0.0047}} & \makecell[r]{1.0648\\{\scriptsize$\pm$ 0.4062}} & \makecell[r]{0.0081\\{\scriptsize$\pm$ 0.0035}} & \textbf{7.00} \\
Dec-ICL & \textbf{Dec-ICL-TS} & \makecell[r]{0.0186\\{\scriptsize$\pm$ 0.0055}} & \makecell[r]{0.0101\\{\scriptsize$\pm$ 0.0043}} & \textbf{\makecell[r]{0.0117\\{\scriptsize$\pm$ 0.0034}}} & \makecell[r]{0.0347\\{\scriptsize$\pm$ 0.0201}} & \makecell[r]{0.0085\\{\scriptsize$\pm$ 0.0021}} & \makecell[r]{0.0217\\{\scriptsize$\pm$ 0.0057}} & \makecell[r]{1.5115\\{\scriptsize$\pm$ 0.8921}} & \makecell[r]{0.0082\\{\scriptsize$\pm$ 0.0029}} & 8.00 \\
Dec-ICL & \textbf{Dec-ICL-TS-Total} & \makecell[r]{0.0192\\{\scriptsize$\pm$ 0.0082}} & \makecell[r]{0.0084\\{\scriptsize$\pm$ 0.0046}} & \makecell[r]{0.0145\\{\scriptsize$\pm$ 0.0058}} & \makecell[r]{0.0262\\{\scriptsize$\pm$ 0.0072}} & \makecell[r]{0.0087\\{\scriptsize$\pm$ 0.0033}} & \makecell[r]{0.0218\\{\scriptsize$\pm$ 0.0058}} & \makecell[r]{1.7430\\{\scriptsize$\pm$ 1.0586}} & \makecell[r]{0.0081\\{\scriptsize$\pm$ 0.0022}} & 8.88 \\
Tuned-ICL & Tuned-ICL-LCB & \makecell[r]{0.0222\\{\scriptsize$\pm$ 0.0074}} & \makecell[r]{0.0097\\{\scriptsize$\pm$ 0.0040}} & \makecell[r]{0.0134\\{\scriptsize$\pm$ 0.0025}} & \makecell[r]{0.0299\\{\scriptsize$\pm$ 0.0147}} & \makecell[r]{0.0099\\{\scriptsize$\pm$ 0.0040}} & \makecell[r]{0.0380\\{\scriptsize$\pm$ 0.0370}} & \makecell[r]{1.2671\\{\scriptsize$\pm$ 0.5609}} & \makecell[r]{0.0075\\{\scriptsize$\pm$ 0.0022}} & 9.00 \\
Tuned-ICL & Tuned-ICL-LogEI & \textbf{\makecell[r]{0.0171\\{\scriptsize$\pm$ 0.0056}}} & \makecell[r]{0.0103\\{\scriptsize$\pm$ 0.0021}} & \makecell[r]{0.0136\\{\scriptsize$\pm$ 0.0023}} & \makecell[r]{0.0301\\{\scriptsize$\pm$ 0.0126}} & \textbf{\makecell[r]{0.0073\\{\scriptsize$\pm$ 0.0023}}} & \makecell[r]{0.0273\\{\scriptsize$\pm$ 0.0110}} & \makecell[r]{1.4968\\{\scriptsize$\pm$ 0.6464}} & \makecell[r]{0.0063\\{\scriptsize$\pm$ 0.0027}} & 9.25 \\
Tuned-PFN & Tuned-PFN-LCB & \makecell[r]{0.0203\\{\scriptsize$\pm$ 0.0076}} & \makecell[r]{0.0087\\{\scriptsize$\pm$ 0.0022}} & \makecell[r]{0.0129\\{\scriptsize$\pm$ 0.0040}} & \makecell[r]{0.0333\\{\scriptsize$\pm$ 0.0112}} & \makecell[r]{0.0091\\{\scriptsize$\pm$ 0.0040}} & \makecell[r]{0.0253\\{\scriptsize$\pm$ 0.0094}} & \makecell[r]{1.2025\\{\scriptsize$\pm$ 0.5797}} & \makecell[r]{0.0065\\{\scriptsize$\pm$ 0.0027}} & 9.38 \\
Baseline & TPE (Optuna) & \makecell[r]{0.0188\\{\scriptsize$\pm$ 0.0048}} & \makecell[r]{0.0101\\{\scriptsize$\pm$ 0.0042}} & \makecell[r]{0.0130\\{\scriptsize$\pm$ 0.0040}} & \makecell[r]{0.0282\\{\scriptsize$\pm$ 0.0100}} & \makecell[r]{0.0089\\{\scriptsize$\pm$ 0.0031}} & \makecell[r]{0.0253\\{\scriptsize$\pm$ 0.0059}} & \makecell[r]{1.3137\\{\scriptsize$\pm$ 0.9038}} & \makecell[r]{0.0068\\{\scriptsize$\pm$ 0.0028}} & 9.62 \\
Tuned-PFN & Tuned-PFN-TS & \makecell[r]{0.0204\\{\scriptsize$\pm$ 0.0066}} & \makecell[r]{0.0097\\{\scriptsize$\pm$ 0.0038}} & \makecell[r]{0.0204\\{\scriptsize$\pm$ 0.0203}} & \makecell[r]{0.0347\\{\scriptsize$\pm$ 0.0129}} & \makecell[r]{0.0108\\{\scriptsize$\pm$ 0.0031}} & \makecell[r]{0.0217\\{\scriptsize$\pm$ 0.0089}} & \makecell[r]{1.4316\\{\scriptsize$\pm$ 0.6628}} & \makecell[r]{0.0075\\{\scriptsize$\pm$ 0.0026}} & 10.38 \\
Dec-ICL & \textbf{Dec-ICL-LCB} & \makecell[r]{0.0258\\{\scriptsize$\pm$ 0.0058}} & \makecell[r]{0.0106\\{\scriptsize$\pm$ 0.0043}} & \makecell[r]{0.0168\\{\scriptsize$\pm$ 0.0087}} & \textbf{\makecell[r]{0.0225\\{\scriptsize$\pm$ 0.0086}}} & \makecell[r]{0.0099\\{\scriptsize$\pm$ 0.0028}} & \makecell[r]{0.0209\\{\scriptsize$\pm$ 0.0071}} & \makecell[r]{1.6019\\{\scriptsize$\pm$ 0.7942}} & \makecell[r]{0.0079\\{\scriptsize$\pm$ 0.0023}} & 11.38 \\
Baseline & GP-TS-Epi & \makecell[r]{0.0260\\{\scriptsize$\pm$ 0.0073}} & \makecell[r]{0.0099\\{\scriptsize$\pm$ 0.0033}} & \makecell[r]{0.0176\\{\scriptsize$\pm$ 0.0084}} & \makecell[r]{0.0263\\{\scriptsize$\pm$ 0.0074}} & \makecell[r]{0.0093\\{\scriptsize$\pm$ 0.0034}} & \makecell[r]{0.0281\\{\scriptsize$\pm$ 0.0140}} & \makecell[r]{1.2368\\{\scriptsize$\pm$ 0.6693}} & \makecell[r]{0.0068\\{\scriptsize$\pm$ 0.0035}} & 11.38 \\
Baseline & GP-EI (BoTorch) & \makecell[r]{0.0192\\{\scriptsize$\pm$ 0.0060}} & \makecell[r]{0.0093\\{\scriptsize$\pm$ 0.0035}} & \makecell[r]{0.0169\\{\scriptsize$\pm$ 0.0071}} & \makecell[r]{0.0300\\{\scriptsize$\pm$ 0.0112}} & \makecell[r]{0.0079\\{\scriptsize$\pm$ 0.0033}} & \makecell[r]{0.0282\\{\scriptsize$\pm$ 0.0095}} & \textbf{\makecell[r]{0.8926\\{\scriptsize$\pm$ 0.3882}}} & \makecell[r]{0.0070\\{\scriptsize$\pm$ 0.0036}} & 11.75 \\
Tuned-PFN & Tuned-PFN-LogEI & \makecell[r]{0.0220\\{\scriptsize$\pm$ 0.0095}} & \makecell[r]{0.0098\\{\scriptsize$\pm$ 0.0030}} & \makecell[r]{0.0132\\{\scriptsize$\pm$ 0.0035}} & \makecell[r]{0.0305\\{\scriptsize$\pm$ 0.0110}} & \makecell[r]{0.0090\\{\scriptsize$\pm$ 0.0048}} & \makecell[r]{0.0289\\{\scriptsize$\pm$ 0.0104}} & \makecell[r]{1.3891\\{\scriptsize$\pm$ 0.7633}} & \makecell[r]{0.0088\\{\scriptsize$\pm$ 0.0025}} & 11.88 \\
Dec-PFN & \textbf{Dec-PFN-LCB} & \makecell[r]{0.0234\\{\scriptsize$\pm$ 0.0077}} & \textbf{\makecell[r]{0.0078\\{\scriptsize$\pm$ 0.0019}}} & \makecell[r]{0.0146\\{\scriptsize$\pm$ 0.0034}} & \makecell[r]{0.0333\\{\scriptsize$\pm$ 0.0109}} & \makecell[r]{0.0093\\{\scriptsize$\pm$ 0.0020}} & \makecell[r]{0.0242\\{\scriptsize$\pm$ 0.0072}} & \makecell[r]{0.9243\\{\scriptsize$\pm$ 0.3335}} & \makecell[r]{0.0065\\{\scriptsize$\pm$ 0.0031}} & 12.00 \\
Dec-PFN & \textbf{Dec-PFN-TS} & \makecell[r]{0.0210\\{\scriptsize$\pm$ 0.0084}} & \makecell[r]{0.0112\\{\scriptsize$\pm$ 0.0032}} & \makecell[r]{0.0162\\{\scriptsize$\pm$ 0.0046}} & \makecell[r]{0.0278\\{\scriptsize$\pm$ 0.0123}} & \makecell[r]{0.0098\\{\scriptsize$\pm$ 0.0018}} & \makecell[r]{0.0280\\{\scriptsize$\pm$ 0.0135}} & \makecell[r]{1.7807\\{\scriptsize$\pm$ 1.1852}} & \textbf{\makecell[r]{0.0061\\{\scriptsize$\pm$ 0.0026}}} & 12.50 \\
Dec-ICL & \textbf{Dec-ICL-EI} & \makecell[r]{0.0222\\{\scriptsize$\pm$ 0.0080}} & \makecell[r]{0.0089\\{\scriptsize$\pm$ 0.0052}} & \makecell[r]{0.0174\\{\scriptsize$\pm$ 0.0103}} & \makecell[r]{0.0390\\{\scriptsize$\pm$ 0.0208}} & \makecell[r]{0.0083\\{\scriptsize$\pm$ 0.0031}} & \textbf{\makecell[r]{0.0209\\{\scriptsize$\pm$ 0.0042}}} & \makecell[r]{1.9498\\{\scriptsize$\pm$ 0.9903}} & \makecell[r]{0.0093\\{\scriptsize$\pm$ 0.0030}} & 12.75 \\
Tuned-ICL & Tuned-ICL-TS & \makecell[r]{0.0216\\{\scriptsize$\pm$ 0.0105}} & \makecell[r]{0.0107\\{\scriptsize$\pm$ 0.0029}} & \makecell[r]{0.0144\\{\scriptsize$\pm$ 0.0055}} & \makecell[r]{0.0281\\{\scriptsize$\pm$ 0.0136}} & \makecell[r]{0.0130\\{\scriptsize$\pm$ 0.0079}} & \makecell[r]{0.0239\\{\scriptsize$\pm$ 0.0093}} & \makecell[r]{1.3450\\{\scriptsize$\pm$ 0.3746}} & \makecell[r]{0.0074\\{\scriptsize$\pm$ 0.0020}} & 12.88 \\
Dec-PFN & \textbf{Dec-PFN-LogEI} & \makecell[r]{0.0204\\{\scriptsize$\pm$ 0.0120}} & \makecell[r]{0.0106\\{\scriptsize$\pm$ 0.0039}} & \makecell[r]{0.0150\\{\scriptsize$\pm$ 0.0021}} & \makecell[r]{0.0373\\{\scriptsize$\pm$ 0.0114}} & \makecell[r]{0.0079\\{\scriptsize$\pm$ 0.0029}} & \makecell[r]{0.0264\\{\scriptsize$\pm$ 0.0055}} & \makecell[r]{1.4025\\{\scriptsize$\pm$ 0.4586}} & \makecell[r]{0.0081\\{\scriptsize$\pm$ 0.0020}} & 13.38 \\
Dec-ICL & \textbf{Dec-ICL-LCB-Total} & \makecell[r]{0.0243\\{\scriptsize$\pm$ 0.0087}} & \makecell[r]{0.0089\\{\scriptsize$\pm$ 0.0019}} & \makecell[r]{0.0210\\{\scriptsize$\pm$ 0.0160}} & \makecell[r]{0.0273\\{\scriptsize$\pm$ 0.0139}} & \makecell[r]{0.0092\\{\scriptsize$\pm$ 0.0032}} & \makecell[r]{0.0240\\{\scriptsize$\pm$ 0.0133}} & \makecell[r]{1.1822\\{\scriptsize$\pm$ 0.3794}} & \makecell[r]{0.0081\\{\scriptsize$\pm$ 0.0030}} & 13.50 \\
Baseline & Random & \makecell[r]{0.0202\\{\scriptsize$\pm$ 0.0106}} & \makecell[r]{0.0112\\{\scriptsize$\pm$ 0.0045}} & \makecell[r]{0.0142\\{\scriptsize$\pm$ 0.0051}} & \makecell[r]{0.0339\\{\scriptsize$\pm$ 0.0084}} & \makecell[r]{0.0145\\{\scriptsize$\pm$ 0.0049}} & \makecell[r]{0.0260\\{\scriptsize$\pm$ 0.0126}} & \makecell[r]{1.5443\\{\scriptsize$\pm$ 0.9417}} & \makecell[r]{0.0086\\{\scriptsize$\pm$ 0.0035}} & 13.62 \\
Baseline & GP-EI & \makecell[r]{0.0201\\{\scriptsize$\pm$ 0.0043}} & \makecell[r]{0.0103\\{\scriptsize$\pm$ 0.0026}} & \makecell[r]{0.0140\\{\scriptsize$\pm$ 0.0048}} & \makecell[r]{0.0291\\{\scriptsize$\pm$ 0.0099}} & \makecell[r]{0.0104\\{\scriptsize$\pm$ 0.0019}} & \makecell[r]{0.0236\\{\scriptsize$\pm$ 0.0027}} & \makecell[r]{1.5944\\{\scriptsize$\pm$ 0.6797}} & \makecell[r]{0.0074\\{\scriptsize$\pm$ 0.0019}} & 14.62 \\
Baseline & GP-TS & \makecell[r]{0.0214\\{\scriptsize$\pm$ 0.0092}} & \makecell[r]{0.0105\\{\scriptsize$\pm$ 0.0026}} & \makecell[r]{0.0144\\{\scriptsize$\pm$ 0.0069}} & \makecell[r]{0.0287\\{\scriptsize$\pm$ 0.0079}} & \makecell[r]{0.0113\\{\scriptsize$\pm$ 0.0034}} & \makecell[r]{0.0361\\{\scriptsize$\pm$ 0.0208}} & \makecell[r]{1.6671\\{\scriptsize$\pm$ 0.9655}} & \makecell[r]{0.0070\\{\scriptsize$\pm$ 0.0034}} & 14.62 \\
Baseline & GP (BoTorch) & \makecell[r]{0.0244\\{\scriptsize$\pm$ 0.0090}} & \makecell[r]{0.0095\\{\scriptsize$\pm$ 0.0019}} & \makecell[r]{0.0210\\{\scriptsize$\pm$ 0.0069}} & \makecell[r]{0.0287\\{\scriptsize$\pm$ 0.0096}} & \makecell[r]{0.0108\\{\scriptsize$\pm$ 0.0039}} & \makecell[r]{0.0271\\{\scriptsize$\pm$ 0.0059}} & \makecell[r]{1.4880\\{\scriptsize$\pm$ 0.7358}} & \makecell[r]{0.0079\\{\scriptsize$\pm$ 0.0032}} & 15.25 \\
\bottomrule
\end{tabular}
\end{adjustbox}
\begin{tablenotes}[flushleft]
\footnotesize
\item Entries report mean $\pm$ std final simple regret over 10 seeds.
\item Bold cells indicate the best mean final regret within a benchmark column.
\end{tablenotes}
\end{threeparttable}
\end{table*}

\begin{table*}[t]
\centering
\caption{Synthetic BO appendix results for all method/acquisition variants in the extended sweep.
Methods are ordered by average rank.}
\label{tab:bo_all_methods_appendix}
\scriptsize
\setlength{\tabcolsep}{3.2pt}
\renewcommand{\arraystretch}{1.08}
\begin{threeparttable}
\begin{adjustbox}{width=0.9\textwidth}
\begin{tabular}{llcccccc}
\toprule
Family & Method & Branin & Hart-4D & Hart-6D & Ackley & \makecell{Noisy\\Ackley} & Avg. rank \\
\midrule
Dec-PFN & \textbf{Dec-PFN-LogEI} & \makecell[r]{0.0454\\{\scriptsize$\pm$ 0.0518}} & \textbf{\makecell[r]{0.0000\\{\scriptsize$\pm$ 0.0000}}} & \makecell[r]{0.2931\\{\scriptsize$\pm$ 0.4564}} & \textbf{\makecell[r]{0.0570\\{\scriptsize$\pm$ 0.0295}}} & \makecell[r]{0.0774\\{\scriptsize$\pm$ 0.0495}} & \textbf{3.80} \\
Dec-ICL & \textbf{Dec-ICL-LogEI} & \makecell[r]{0.0493\\{\scriptsize$\pm$ 0.0530}} & \textbf{\makecell[r]{0.0000\\{\scriptsize$\pm$ 0.0000}}} & \makecell[r]{0.4159\\{\scriptsize$\pm$ 0.1030}} & \makecell[r]{0.1211\\{\scriptsize$\pm$ 0.0758}} & \makecell[r]{0.1080\\{\scriptsize$\pm$ 0.0770}} & 4.40 \\
Tuned-PFN & Tuned-PFN-LogEI & \makecell[r]{0.0661\\{\scriptsize$\pm$ 0.0656}} & \textbf{\makecell[r]{0.0000\\{\scriptsize$\pm$ 0.0000}}} & \makecell[r]{0.2331\\{\scriptsize$\pm$ 0.3099}} & \makecell[r]{0.0607\\{\scriptsize$\pm$ 0.0521}} & \textbf{\makecell[r]{0.0373\\{\scriptsize$\pm$ 0.0325}}} & 4.40 \\
Baseline & GP (BoTorch) & \makecell[r]{0.0384\\{\scriptsize$\pm$ 0.0448}} & \textbf{\makecell[r]{0.0000\\{\scriptsize$\pm$ 0.0000}}} & \textbf{\makecell[r]{0.1718\\{\scriptsize$\pm$ 0.1010}}} & \multicolumn{1}{c}{--} & \makecell[r]{0.1381\\{\scriptsize$\pm$ 0.1947}} & 5.00 \\
Tuned-ICL & Tuned-ICL-LogEI & \textbf{\makecell[r]{0.0217\\{\scriptsize$\pm$ 0.0251}}} & \textbf{\makecell[r]{0.0000\\{\scriptsize$\pm$ 0.0000}}} & \makecell[r]{0.3840\\{\scriptsize$\pm$ 0.0876}} & \makecell[r]{0.1211\\{\scriptsize$\pm$ 0.0758}} & \makecell[r]{0.1080\\{\scriptsize$\pm$ 0.0770}} & 6.00 \\
Baseline & GP-EI (BoTorch) & \makecell[r]{0.0244\\{\scriptsize$\pm$ 0.0258}} & \textbf{\makecell[r]{0.0000\\{\scriptsize$\pm$ 0.0000}}} & \makecell[r]{0.3619\\{\scriptsize$\pm$ 0.6948}} & \makecell[r]{0.1691\\{\scriptsize$\pm$ 0.1003}} & \makecell[r]{0.3216\\{\scriptsize$\pm$ 0.2031}} & 9.20 \\
Baseline & GP-EI & \makecell[r]{0.0222\\{\scriptsize$\pm$ 0.0229}} & \textbf{\makecell[r]{0.0000\\{\scriptsize$\pm$ 0.0000}}} & \makecell[r]{0.1948\\{\scriptsize$\pm$ 0.2133}} & \makecell[r]{0.3709\\{\scriptsize$\pm$ 0.2515}} & \makecell[r]{0.5974\\{\scriptsize$\pm$ 0.3809}} & 9.60 \\
Dec-PFN & \textbf{Dec-PFN-TS} & \makecell[r]{0.0655\\{\scriptsize$\pm$ 0.0247}} & \textbf{\makecell[r]{0.0000\\{\scriptsize$\pm$ 0.0000}}} & \makecell[r]{0.4448\\{\scriptsize$\pm$ 0.1367}} & \makecell[r]{0.2489\\{\scriptsize$\pm$ 0.1354}} & \makecell[r]{0.2436\\{\scriptsize$\pm$ 0.1301}} & 10.20 \\
Dec-ICL & \textbf{Dec-ICL-EI} & \makecell[r]{0.0539\\{\scriptsize$\pm$ 0.0539}} & \textbf{\makecell[r]{0.0000\\{\scriptsize$\pm$ 0.0000}}} & \makecell[r]{0.4490\\{\scriptsize$\pm$ 0.1096}} & \makecell[r]{0.2640\\{\scriptsize$\pm$ 0.1494}} & \makecell[r]{0.2389\\{\scriptsize$\pm$ 0.1357}} & 10.60 \\
Dec-ICL & \textbf{Dec-ICL-LCB} & \makecell[r]{0.0425\\{\scriptsize$\pm$ 0.0482}} & \textbf{\makecell[r]{0.0000\\{\scriptsize$\pm$ 0.0000}}} & \makecell[r]{0.4336\\{\scriptsize$\pm$ 0.1088}} & \makecell[r]{0.2636\\{\scriptsize$\pm$ 0.1456}} & \makecell[r]{0.2377\\{\scriptsize$\pm$ 0.1331}} & 11.00 \\
Tuned-PFN & Tuned-PFN-TS & \makecell[r]{0.0368\\{\scriptsize$\pm$ 0.0255}} & \textbf{\makecell[r]{0.0000\\{\scriptsize$\pm$ 0.0000}}} & \makecell[r]{0.4070\\{\scriptsize$\pm$ 0.1513}} & \makecell[r]{0.2595\\{\scriptsize$\pm$ 0.1252}} & \makecell[r]{0.2486\\{\scriptsize$\pm$ 0.1317}} & 11.60 \\
Tuned-ICL & Tuned-ICL-TS & \makecell[r]{0.0935\\{\scriptsize$\pm$ 0.0524}} & \textbf{\makecell[r]{0.0000\\{\scriptsize$\pm$ 0.0000}}} & \makecell[r]{0.3513\\{\scriptsize$\pm$ 0.1727}} & \makecell[r]{0.2551\\{\scriptsize$\pm$ 0.1220}} & \makecell[r]{0.2460\\{\scriptsize$\pm$ 0.1300}} & 12.00 \\
Tuned-PFN & Tuned-PFN-LCB & \makecell[r]{0.0459\\{\scriptsize$\pm$ 0.0421}} & \textbf{\makecell[r]{0.0000\\{\scriptsize$\pm$ 0.0000}}} & \makecell[r]{0.4064\\{\scriptsize$\pm$ 0.1549}} & \makecell[r]{0.2456\\{\scriptsize$\pm$ 0.1335}} & \makecell[r]{0.2360\\{\scriptsize$\pm$ 0.1301}} & 12.20 \\
Dec-ICL & \textbf{Dec-ICL-TS} & \makecell[r]{0.0473\\{\scriptsize$\pm$ 0.0353}} & \textbf{\makecell[r]{0.0000\\{\scriptsize$\pm$ 0.0000}}} & \makecell[r]{0.4023\\{\scriptsize$\pm$ 0.1672}} & \makecell[r]{0.2929\\{\scriptsize$\pm$ 0.1265}} & \makecell[r]{0.2490\\{\scriptsize$\pm$ 0.1296}} & 12.60 \\
Dec-PFN & \textbf{Dec-PFN-LCB} & \makecell[r]{0.0548\\{\scriptsize$\pm$ 0.0392}} & \textbf{\makecell[r]{0.0000\\{\scriptsize$\pm$ 0.0000}}} & \makecell[r]{0.4111\\{\scriptsize$\pm$ 0.1599}} & \makecell[r]{0.2427\\{\scriptsize$\pm$ 0.1350}} & \makecell[r]{0.2489\\{\scriptsize$\pm$ 0.1354}} & 13.20 \\
Dec-ICL & \textbf{Dec-ICL-TS-Total} & \makecell[r]{0.0632\\{\scriptsize$\pm$ 0.0567}} & \textbf{\makecell[r]{0.0000\\{\scriptsize$\pm$ 0.0000}}} & \makecell[r]{0.4101\\{\scriptsize$\pm$ 0.1846}} & \makecell[r]{0.2705\\{\scriptsize$\pm$ 0.1383}} & \makecell[r]{0.2490\\{\scriptsize$\pm$ 0.1296}} & 13.60 \\
Dec-ICL & \textbf{Dec-ICL-LCB-Total} & \makecell[r]{0.0586\\{\scriptsize$\pm$ 0.0449}} & \textbf{\makecell[r]{0.0000\\{\scriptsize$\pm$ 0.0000}}} & \makecell[r]{0.4346\\{\scriptsize$\pm$ 0.1576}} & \makecell[r]{0.2533\\{\scriptsize$\pm$ 0.1464}} & \makecell[r]{0.2371\\{\scriptsize$\pm$ 0.1336}} & 13.80 \\
Tuned-ICL & Tuned-ICL-LCB & \makecell[r]{0.0569\\{\scriptsize$\pm$ 0.0304}} & \textbf{\makecell[r]{0.0000\\{\scriptsize$\pm$ 0.0000}}} & \makecell[r]{0.4507\\{\scriptsize$\pm$ 0.1137}} & \makecell[r]{0.2421\\{\scriptsize$\pm$ 0.1366}} & \makecell[r]{0.2367\\{\scriptsize$\pm$ 0.1298}} & 13.80 \\
Baseline & TPE (Optuna) & \makecell[r]{0.1103\\{\scriptsize$\pm$ 0.1000}} & \textbf{\makecell[r]{0.0000\\{\scriptsize$\pm$ 0.0000}}} & \makecell[r]{0.4510\\{\scriptsize$\pm$ 0.1883}} & \multicolumn{1}{c}{--} & \makecell[r]{0.8044\\{\scriptsize$\pm$ 0.6889}} & 14.25 \\
Baseline & GP-TS-Epi & \makecell[r]{0.0524\\{\scriptsize$\pm$ 0.0468}} & \textbf{\makecell[r]{0.0000\\{\scriptsize$\pm$ 0.0000}}} & \makecell[r]{0.6015\\{\scriptsize$\pm$ 0.3297}} & \multicolumn{1}{c}{--} & \multicolumn{1}{c}{--} & 16.67 \\
Baseline & Random & \makecell[r]{0.3535\\{\scriptsize$\pm$ 0.4968}} & \makecell[r]{0.0342\\{\scriptsize$\pm$ 0.0979}} & \makecell[r]{1.0019\\{\scriptsize$\pm$ 0.4135}} & \makecell[r]{2.0359\\{\scriptsize$\pm$ 0.8740}} & \makecell[r]{2.0359\\{\scriptsize$\pm$ 0.8740}} & 19.20 \\
Baseline & GP-TS & \makecell[r]{0.0680\\{\scriptsize$\pm$ 0.0452}} & \textbf{\makecell[r]{0.0000\\{\scriptsize$\pm$ 0.0000}}} & \makecell[r]{0.5966\\{\scriptsize$\pm$ 0.2847}} & \makecell[r]{0.6457\\{\scriptsize$\pm$ 0.3547}} & \makecell[r]{0.6380\\{\scriptsize$\pm$ 0.3471}} & 19.40 \\
\bottomrule
\end{tabular}
\end{adjustbox}
\begin{tablenotes}[flushleft]
\footnotesize
\item Entries report mean $\pm$ std final simple regret over 10 seeds (0--9).
\item Cells with fewer than 5 available seeds are shown as ``--'' and are excluded from the average-rank computation.
\item Bold cells indicate the best mean final regret within a benchmark column.
\end{tablenotes}
\end{threeparttable}
\end{table*}

\subsection{HPO: Per-Benchmark Regret Curves}
\label{app:hpo_curves}

\begin{figure}[ht]
    \centering
    \includegraphics[width=\linewidth]{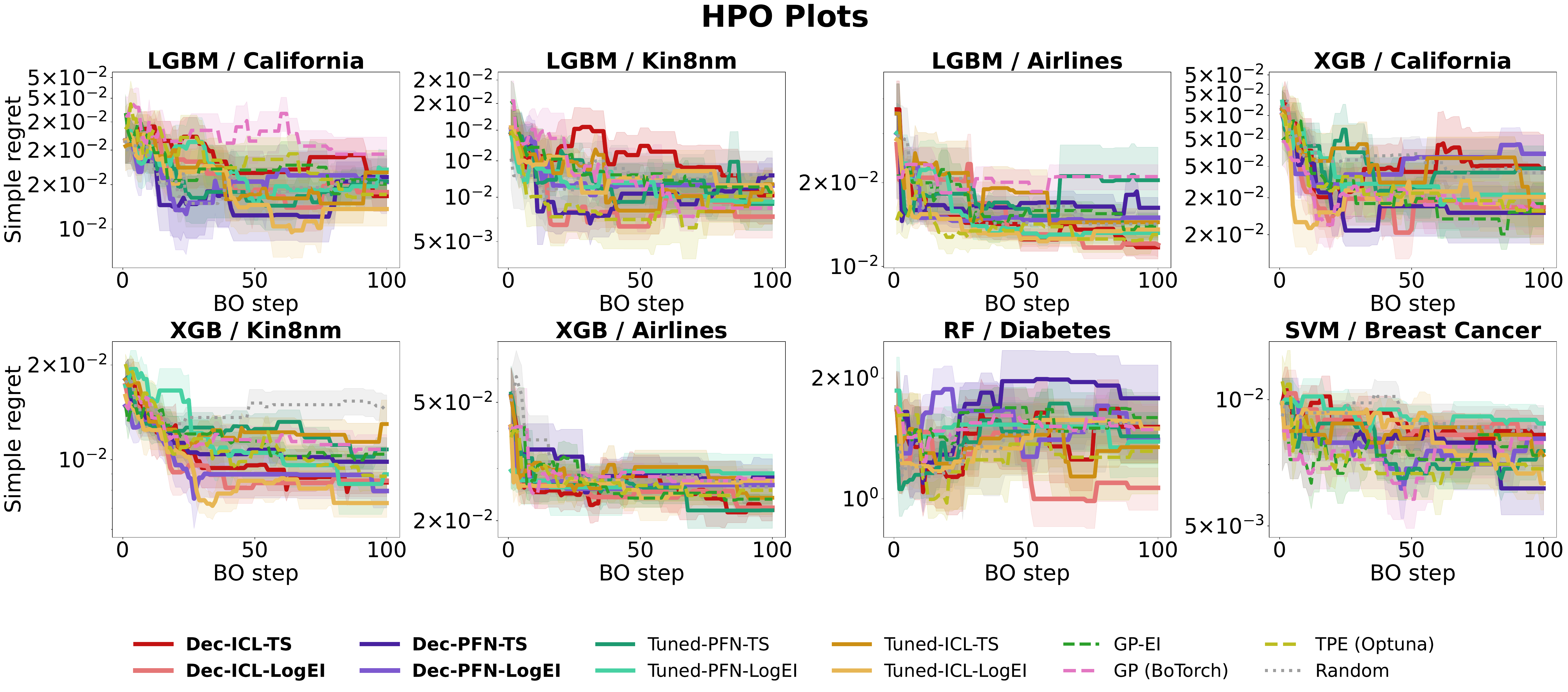}
    \caption{Simple regret over $100$ BO steps on all $8$ core HPO benchmarks,
    mean $\pm$ SE over $10$ seeds.
    Dec-ICL variants (red family, solid) consistently place among the top
    methods across LGBM and XGBoost tasks.
    The RF/Diabetes benchmark is visibly noisier due to its small dataset
    ($442$ examples) and stochastic train/val splits; most methods converge
    to similar final regret, making average rank a more reliable summary
    than individual curves.
    SVM/Breast Cancer shows the clearest separation, with
    Dec-ICL-LCB achieving the lowest final regret.}
    \label{fig:hpo_curves_appendix}
\end{figure}

\subsection{HPO: Full Method Comparison and LCB Ablation}
\label{app:hpo_full}

\begin{figure}[ht]
    \centering
    \includegraphics[width=\linewidth]{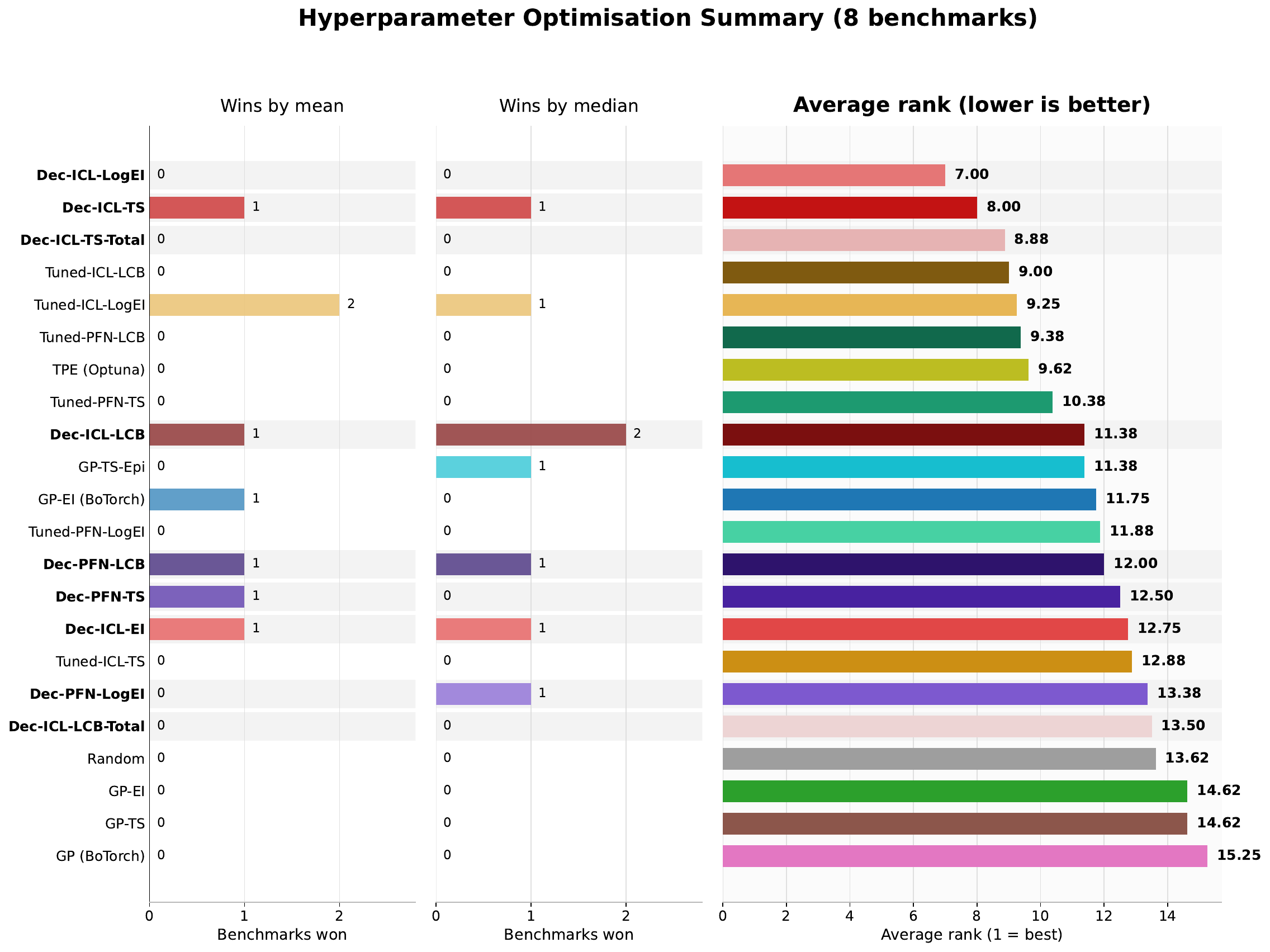}
    \caption{Full HPO summary including all acquisition variants.
    Methods are sorted by average rank (right panel).
    \textbf{Bold} labels indicate our proposed decoupled models.}
    \label{fig:hpo_full_appendix}
\end{figure}

Figure~\ref{fig:hpo_full_appendix} extends the main-text summary to include
all LCB and ablation variants.
Several observations are worth noting.

\paragraph{LCB ablation.}
Dec-ICL-LCB ranks $11.38$ on average across the $8$ HPO benchmarks,
notably weaker than Dec-ICL-LogEI ($7.00$) and Dec-ICL-TS ($8.00$)
despite using the same epistemic signal.
This is consistent with LCB being sensitive to its $\beta$ parameter in
high-dimensional or structured spaces where the confidence bound is harder
to calibrate.
The epistemic signal nonetheless matters: its total-variance counterpart
Dec-ICL-LCB-Total ranks $13.50$, more than two places worse, and
Dec-ICL-TS-Total ($8.88$) similarly trails Dec-ICL-TS ($8.00$).

\paragraph{Tuned vs.\ decoupled.}
For the ICL backbone, the decoupled models occupy the top two positions
(Dec-ICL-LogEI $7.00$, Dec-ICL-TS $8.00$) while the tuned counterparts
rank $9.25$ and $12.88$ respectively.

\paragraph{Baselines.}
TPE ($9.62$) is the strongest baseline on HPO by average rank, ahead of
all GP variants ($11.38$--$15.25$).
This is expected: TPE is specifically designed for HPO-style discrete/mixed
spaces and does not require a differentiable surrogate.

\subsection{Synthetic BO: Per-Benchmark Regret Curves}
\label{app:bo_curves}

\begin{figure}[ht]
    \centering
    \includegraphics[width=\linewidth]{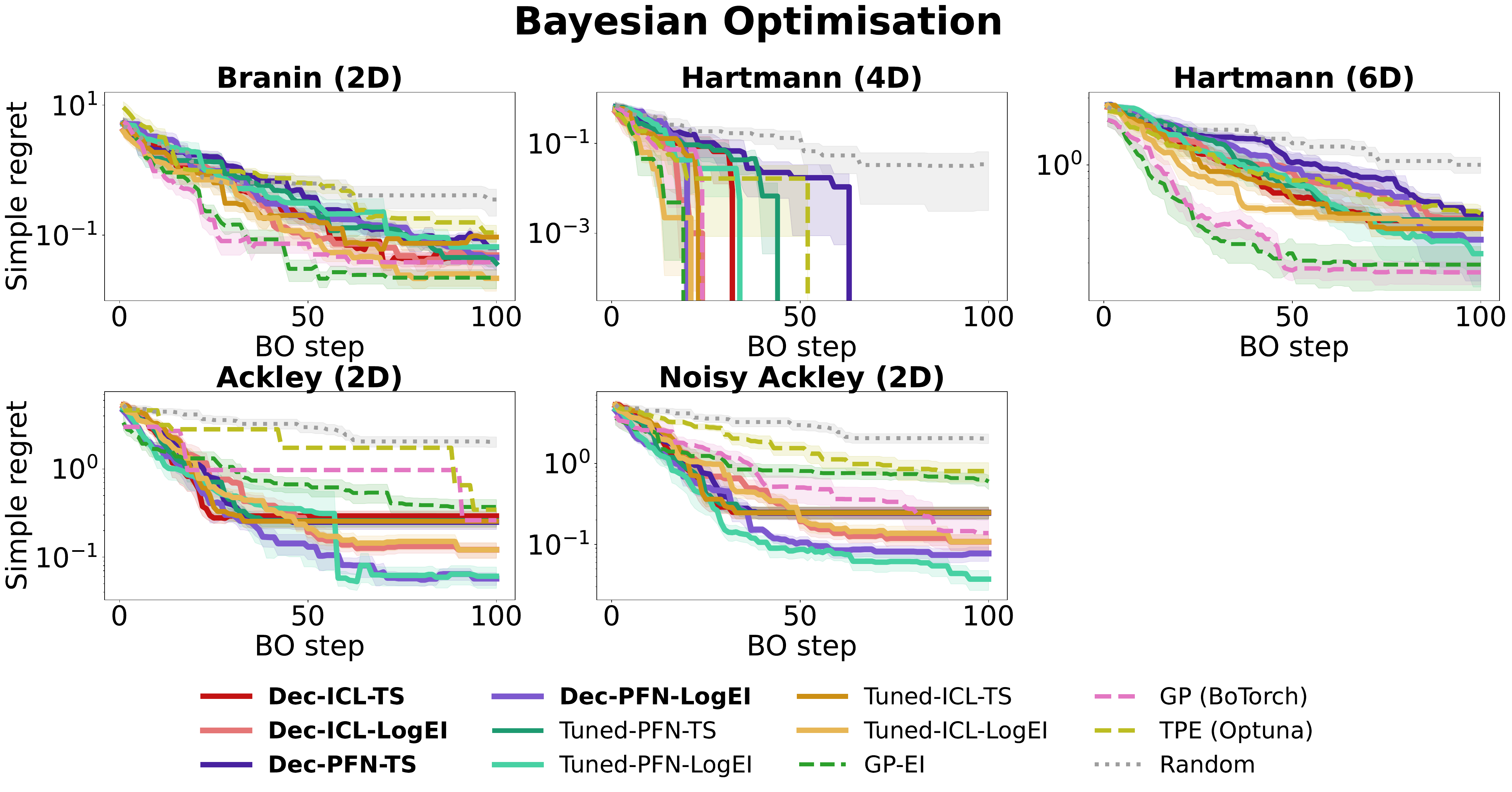}
    \caption{Simple regret over $100$ BO steps on the $5$ synthetic
    benchmarks, mean $\pm$ SE over $10$ seeds.
    LogEI variants (thicker solid lines) converge faster on all
    smooth benchmarks.}
    \label{fig:bo_curves_appendix}
\end{figure}

\subsection{Synthetic BO: Full Method Comparison}
\label{app:bo_full}

\begin{figure}[ht]
    \centering
    \includegraphics[width=\linewidth]{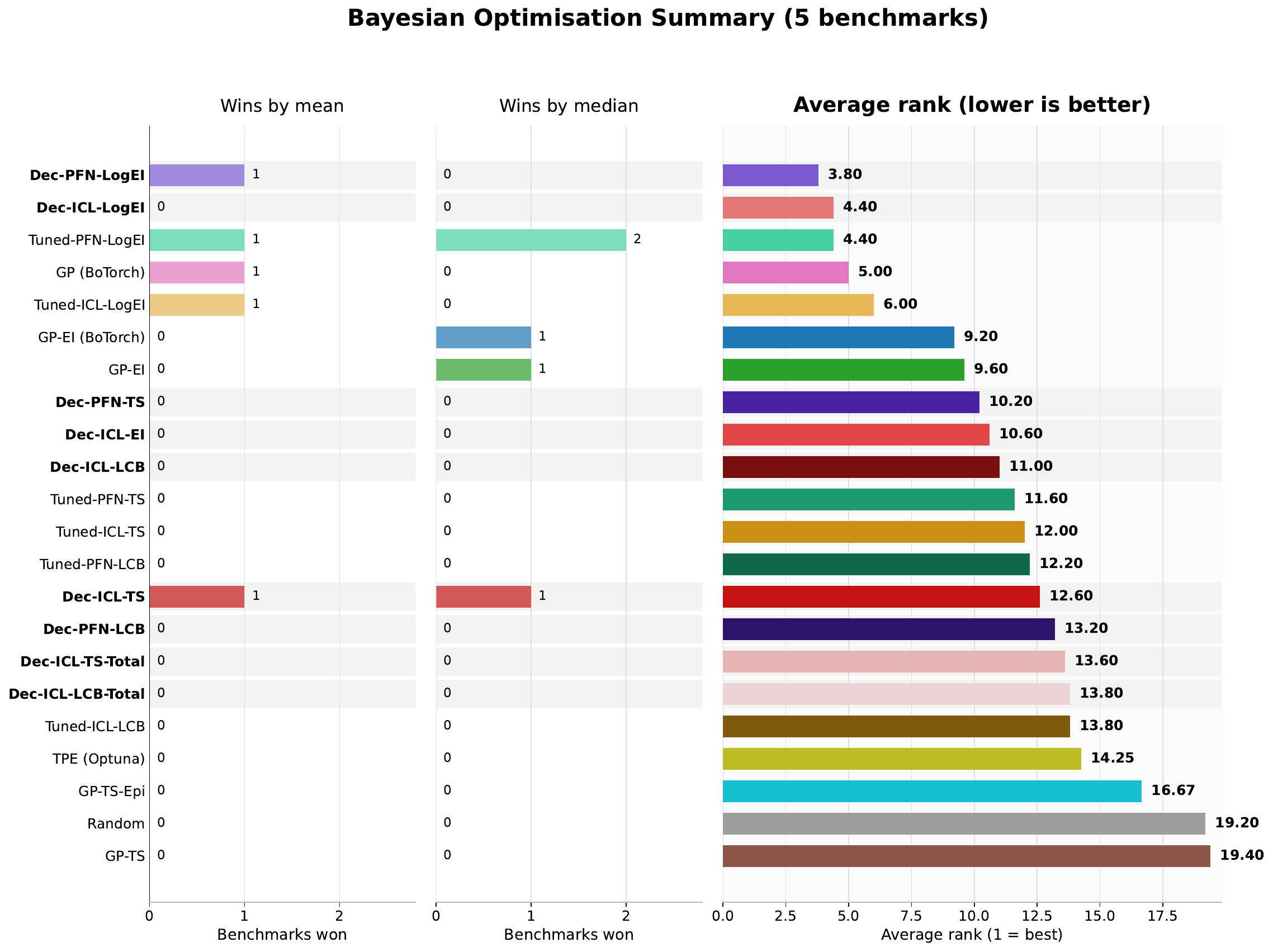}
    \caption{Full synthetic BO summary including all acquisition variants,
    sorted by average rank.
    \textbf{Bold} labels indicate our proposed decoupled models.}
    \label{fig:bo_full_appendix}
\end{figure}

Figure~\ref{fig:bo_full_appendix} shows the complete ranking over the
$5$ synthetic benchmarks.
The LogEI acquisition is the dominant factor: the top three methods
(Dec-PFN-LogEI $3.80$, Dec-ICL-LogEI $4.40$, Tuned-PFN-LogEI $4.40$)
all use gradient-based inner-loop optimisation, and the gap to the next
group (GP (BoTorch) $5.00$) is larger than any within-group difference.
By contrast, TS and LCB variants rank in the $10$--$13$ range regardless
of backbone, confirming that acquisition optimisation quality, not
model quality, is the bottleneck on smooth low-dimensional functions.
The total-variance ablations rank near the bottom of the field
(Dec-ICL-TS-Total $13.60$, Dec-ICL-LCB-Total $13.80$), well below their
epistemic counterparts, which isolates the contribution of the
decomposition even within the gradient-based regime.


\section{Additional Regression Results}
\label{app:regression}

Table~\ref{tab:regression_local_main} reports the observation-level regression
results for the decoupled and tuned models. Across datasets, the two variants
are broadly similar, with tuned models sometimes holding a small advantage in
NLL, which is expected because they are optimized solely for the noisy-target
predictive objective. These results suggest that the auxiliary decomposition
losses do not substantially harm predictive performance, but that the main
benefit of the proposed method lies in downstream decision-making rather than in
observation-only regression accuracy.

\begin{table*}[t]
\centering
\caption{
Local regression results for the four tuned models.
Each entry is mean $\pm$ std over 10 random train/test splits.
We use $n_{\mathrm{train}}=1024$ and $n_{\mathrm{test}}=3000$.
Lower is better.
Within each row, bold indicates the better method \emph{within the same backbone pair}:
Decoupled vs Tuned for TabICL, and Decoupled vs Tuned for TabPFN.
}
\label{tab:regression_local_main}
\small
\setlength{\tabcolsep}{5pt}
\renewcommand{\arraystretch}{1.12}
\begin{threeparttable}
\begin{adjustbox}{max width=\textwidth}
\begin{tabular}{llcccc}
\toprule
& & \multicolumn{2}{c}{TabICL Backbone} & \multicolumn{2}{c}{TabPFN Backbone} \\
\cmidrule(lr){3-4}\cmidrule(lr){5-6}
Dataset & Metric $\downarrow$ & \textbf{Decoupled (Ours)} & Tuned & \textbf{Decoupled (Ours)} & Tuned \\
\midrule

\multirow{2}{*}{California}
& RMSE
& 0.5049 $\pm$ 0.0160
& 0.5047 $\pm$ 0.0155
& 0.5082 $\pm$ 0.0139
& 0.5077 $\pm$ 0.0150 \\
& NLL
& 0.488 $\pm$ 0.042
& 0.480 $\pm$ 0.040
& 0.515 $\pm$ 0.037
& 0.505 $\pm$ 0.040 \\
\midrule

\multirow{2}{*}{Kin8nm}
& RMSE
& 0.0879 $\pm$ 0.0024
& 0.0878 $\pm$ 0.0025
& 0.1014 $\pm$ 0.0037
& 0.0989 $\pm$ 0.0038 \\
& NLL
& -1.050 $\pm$ 0.027
& -1.051 $\pm$ 0.027
& -0.938 $\pm$ 0.036
& -0.961 $\pm$ 0.034 \\
\midrule

\multirow{2}{*}{Airlines}
& RMSE
& 1.9512 $\pm$ 0.0170
& 1.9521 $\pm$ 0.0161
& 1.9622 $\pm$ 0.0211
& 1.9469 $\pm$ 0.0170 \\
& NLL
& 2.085 $\pm$ 0.011
& 2.082 $\pm$ 0.009
& 2.093 $\pm$ 0.012
& 2.087 $\pm$ 0.015 \\
\midrule

\multirow{2}{*}{Year}
& RMSE
& 9.3594 $\pm$ 0.1155
& 9.4008 $\pm$ 0.1302
& 9.4035 $\pm$ 0.1337
& 9.3967 $\pm$ 0.0963 \\
& NLL
& 3.537 $\pm$ 0.016
& 3.537 $\pm$ 0.017
& 3.547 $\pm$ 0.028
& 3.546 $\pm$ 0.023 \\
\midrule

\multirow{2}{*}{Wine}
& RMSE
& 0.6967 $\pm$ 0.0078
& 0.6982 $\pm$ 0.0071
& 0.7457 $\pm$ 0.0156
& 0.7440 $\pm$ 0.0158 \\
& NLL
& 1.288 $\pm$ 0.225
& 1.263 $\pm$ 0.222
& 1.207 $\pm$ 0.081
& 1.295 $\pm$ 0.099 \\
\bottomrule
\end{tabular}
\end{adjustbox}
\begin{tablenotes}[flushleft]
\footnotesize
\item NLL denotes Gaussian negative log-likelihood computed from the predicted mean and total predictive variance.

\end{tablenotes}
\end{threeparttable}
\end{table*}

\section{Additional Active Learning Details}
\label{app:active_learning}

\subsection{Protocol}
\label{app:al_protocol}

We evaluate pool-based active learning on five real regression datasets:
California Housing, Kin8nm, NYC Taxi, Airlines, and YearPredictionMSD.
For each dataset, we construct a fixed unlabeled pool and a fixed held-out test
set, and then run \emph{sequential} active learning. Specifically, we begin from
an initial labelled context set of size \(n_0 = 1024\), and then acquire
\(n_q = 1024\) additional points \emph{one at a time}. Thus, the final reported
performance corresponds to 1024 sequential acquisitions rather than a single
1024-point batch selection. For all real-data runs, the unlabeled pool size is
5000 and the test set size is 3000.

For a given dataset and seed, the pool/test split is fixed across acquisition
methods, and the same warm-start indices are reused across methods. This ensures
that differences between acquisition rules are not confounded by different data
splits or different initial labelled sets. The active-learning runner stores the
pool/test split, per-step trajectories, and per-run summaries for reproducibility.

Table~\ref{tab:al_protocol_real} summarises the exact protocol used for the
real-data experiments reported in the paper.

\begin{table}[t]
\centering
\caption{Protocol for the real-data active-learning experiments.}
\label{tab:al_protocol_real}
\small
\setlength{\tabcolsep}{5pt}
\renewcommand{\arraystretch}{1.08}
\begin{tabular}{lcccccc}
\toprule
Dataset & Dim & \(n_0\) & \(n_q\) & Pool & Test & Seeds \\
\midrule
California Housing & 8  & 1024 & 1024 & 5000 & 3000 & 16 \\
Kin8nm             & 8  & 1024 & 1024 & 5000 & 3000 & 20 \\
NYC Taxi           & 7  & 1024 & 1024 & 5000 & 3000 & 19 \\
Airlines           & 5  & 1024 & 1024 & 5000 & 3000 & 20 \\
YearPredictionMSD  & 90 & 1024 & 1024 & 5000 & 3000 & 20 \\
\bottomrule
\end{tabular}
\end{table}

\paragraph{Evaluation metrics.}
At each acquisition step we record test RMSE, MAE, Gaussian NLL, CRPS, mean
epistemic / aleatoric / total variance, and empirical coverage and interval
widths at 50\%, 80\%, 90\%, and 95\%. The summary tables in the main text report
the final-step metrics after all \(n_q\) sequential acquisitions have been made.

\subsection{Acquisition Scores}
\label{app:al_acq_defs}

The main-text matched comparison isolates the effect of the uncertainty signal:
decoupled models acquire using the latent epistemic variance, whereas tuned
baselines acquire using total predictive variance. In the broader appendix
ablation below, we additionally compare several acquisition heuristics on the
decoupled TabICL backbone.

For a candidate point \(x\), let \(v_{\mathrm{epi}}(x)\) denote the latent
epistemic variance, \(v_{\mathrm{tot}}(x)\) the total predictive variance, and
\(p^{(y)}(x)\) the induced observation-level binned predictive distribution.

\paragraph{Epistemic uncertainty sampling.}
The epistemic acquisition score is
\[
\alpha_{\mathrm{epi}}(x) = v_{\mathrm{epi}}(x),
\]
so points with the largest latent epistemic variance are selected.

\paragraph{Total uncertainty sampling.}
The total-variance acquisition score is
\[
\alpha_{\mathrm{tot}}(x) = v_{\mathrm{tot}}(x),
\]
which corresponds to standard predictive-variance sampling.

\paragraph{BALD.}
For the decoupled binned model, BALD is computed from the induced predictive
distribution and the latent-noise transition model. Writing
\[
p_k^{(y)}(x)=\sum_{j=1}^K \pi_j^{(f)}(x)\,T_{k\mid j}(x),
\]
the score is
\[
\alpha_{\mathrm{BALD}}(x)
=
H\!\bigl(p^{(y)}(x)\bigr)
-
\sum_{j=1}^K \pi_j^{(f)}(x)\,H\!\bigl(T_{\cdot\mid j}(x)\bigr),
\]
where \(H(\cdot)\) denotes entropy. This is the mutual-information-style score
implemented in the active-learning runner.

\paragraph{EPIG proxy.}
For EPIG we use a representation-based proxy. Let \(r(x)\) denote the normalized
final representation of candidate \(x\), and define the epistemic fraction
\[
\eta(x)=\frac{v_{\mathrm{epi}}(x)}{v_{\mathrm{tot}}(x)}.
\]
Given a target subset \(\mathcal X_{\mathrm{tar}}\) drawn from the test
distribution, we define
\[
\rho^2(x,x')
=
\bigl(r(x)^\top r(x')\bigr)^2\,\eta(x)\,\eta(x'),
\]
and score each candidate by
\[
\alpha_{\mathrm{EPIG}}(x)
=
\frac{1}{|\mathcal X_{\mathrm{tar}}|}
\sum_{x' \in \mathcal X_{\mathrm{tar}}}
\left(
-\frac{1}{2}\log(1-\rho^2(x,x'))
\right).
\]
In the real-data runs reported here, the target subset size is 2028 points,
sampled from the 3000-point test set for each seed.

\paragraph{Random.}
The random baseline samples uniformly from the remaining pool.

\subsection{Additional Real-Data Acquisition Comparison}
\label{app:al_real_ablation}

Table~\ref{tab:al_real_ablation_rmse} complements the main-text matched
comparison by evaluating several acquisition heuristics on the same decoupled
TabICL model. This ablation addresses a different question from the main text:
rather than comparing decoupled and tuned uncertainty signals within the same
matched setup, it compares several acquisition objectives on a fixed decoupled
model. As a result, the conclusions are naturally more heterogeneous.

\begin{table*}[t]
\centering
\caption{
Additional active-learning acquisition comparison on the real datasets using the
decoupled TabICL backbone. Values are mean $\pm$ std final RMSE over the seeds
listed in Table~\ref{tab:al_protocol_real}. Lower is better. Boldface marks the
best acquisition score within each dataset.
}
\label{tab:al_real_ablation_rmse}
\small
\setlength{\tabcolsep}{5pt}
\renewcommand{\arraystretch}{1.08}
\begin{adjustbox}{max width=\textwidth}
\begin{tabular}{lccccc}
\toprule
Dataset & Random & Tuned-ICL &\textbf{ BALD (ours)} & \textbf{EPIG (ours)} & \textbf{Epistemic (ours)} \\
\midrule
California Housing
& 0.4793 $\pm$ 0.0059
& 0.4783 $\pm$ 0.0033
& 0.4787 $\pm$ 0.0044
& \textbf{0.4752 $\pm$ 0.0058}
& 0.4772 $\pm$ 0.0035 \\
Kin8nm
& 0.0787 $\pm$ 0.0013
& 0.0805 $\pm$ 0.0012
& 0.0790 $\pm$ 0.0011
& \textbf{0.0784 $\pm$ 0.0011}
& 0.0796 $\pm$ 0.0010 \\
NYC Taxi
& 3.3588 $\pm$ 0.1177
& 3.1296 $\pm$ 0.0157
& \textbf{3.1208 $\pm$ 0.0257}
& 3.2786 $\pm$ 0.0868
& 3.1275 $\pm$ 0.0140 \\
Airlines
& 1.9124 $\pm$ 0.0059
& \textbf{1.9040 $\pm$ 0.0037}
& 1.9042 $\pm$ 0.0034
& 1.9064 $\pm$ 0.0044
& 1.9046 $\pm$ 0.0028 \\
YearPredictionMSD
& 9.3926 $\pm$ 0.0693
& 9.4125 $\pm$ 0.0306
& \textbf{9.3693 $\pm$ 0.0441}
& 9.4272 $\pm$ 0.0493
& 9.4150 $\pm$ 0.0310 \\
\bottomrule
\end{tabular}
\end{adjustbox}
\end{table*}

\paragraph{Interpretation.}
The broader acquisition ablation is mixed, which is typical in active learning:
no single heuristic dominates on every dataset. Nevertheless, epistemic
acquisition remains consistently competitive and improves substantially over the
random baseline on California, NYC Taxi, and Airlines. The matched comparison in
the main text should therefore be interpreted as isolating the value of the
decoupled uncertainty signal relative to a tuned total-uncertainty baseline,
while Table~\ref{tab:al_real_ablation_rmse} provides a wider acquisition-level
context.

\subsection{Learning Curves}
\label{app:al_curves}

The final-step comparisons in the main text do not show when an acquisition rule
begins to separate from the alternatives. We therefore also inspect the full
RMSE trajectories over the 1024 sequential acquisitions.

\begin{figure*}[t]
    \centering
    \includegraphics[width=\linewidth]{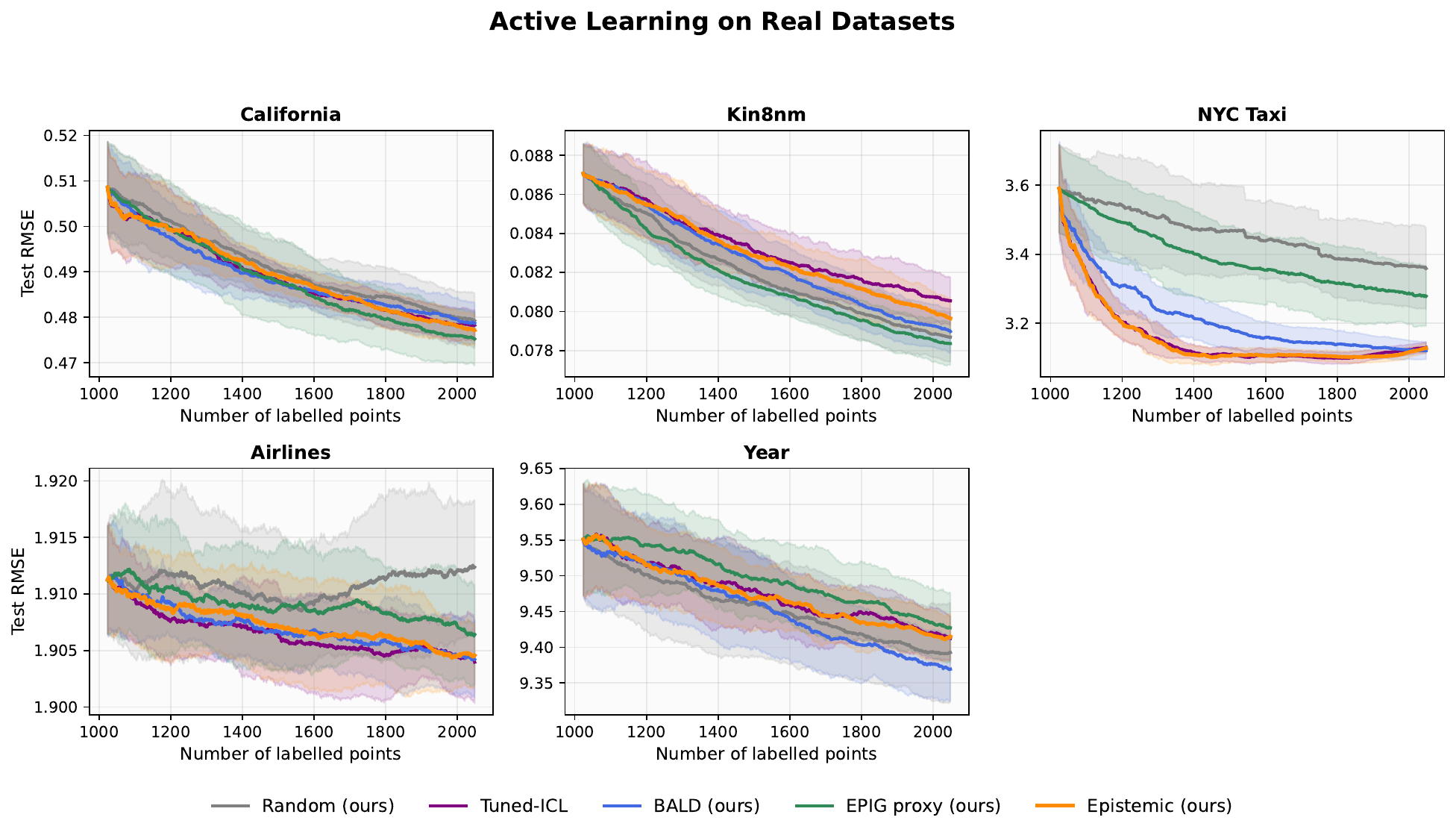}
    \caption{
    Test RMSE as a function of the number of labelled points for the real-data
    active-learning experiments. Curves show mean $\pm$ std over seeds. The
    trajectories complement the final-step tables by showing whether the gains
    from a given acquisition rule appear early, late, or only after many
    sequential acquisitions.
    }
    \label{fig:al_real_rmse_curves}
\end{figure*}

\subsection{Reproducibility Details}
\label{app:al_repro}

For each dataset and seed, the active-learning pipeline stores:
(i) the fixed pool/test split,
(ii) the warm-start indices,
(iii) per-step acquisition trajectories in JSONL format,
(iv) per-run summary files containing the final metrics, and
(v) curve arrays containing RMSE, NLL, MAE, and calibration summaries across
acquisition steps. These saved artifacts make it possible to regenerate the
appendix plots and to recompute alternative summaries without rerunning the full
active-learning experiments.

\section{Limitations, Broader Impact, and Reproducibility Notes}
\label{app:limitations_impact_repro}

\paragraph{Limitations.}
Our method relies on a synthetic task prior that explicitly defines a latent
signal and an observation-noise process. Its effectiveness therefore depends on
how well this synthetic decomposition transfers to real downstream tasks. The
experiments cover several BO, HPO, and active-learning settings, but they do not
exhaustively characterize sensitivity to the synthetic prior, auxiliary loss
weights, bin discretization, or non-Gaussian noise models. The current
implementation also uses moment-matched Gaussian acquisition functions for
binned PFN outputs, which is a practical approximation rather than an exact
acquisition rule for the full binned predictive distribution.

\paragraph{Broader impact.}
This work is foundational research on uncertainty estimation and sequential
decision-making. Potential positive impacts include more sample-efficient
optimization and active learning in settings where evaluations are expensive or
noisy. Potential negative impacts arise if uncertainty estimates are
over-trusted in high-stakes applications, or if more efficient optimization is
used to accelerate undesirable model development. We do not deploy the method in
sensitive decision-making domains, and our empirical evaluation is limited to
standard tabular, synthetic BO, and HPO benchmarks.

\paragraph{Compute resources.}
Experiments were run on GPU compute nodes using public pretrained TabPFN and
TabICL checkpoints, with additional CPU-based evaluations for classical
baselines and HPO objective evaluations. The main cost comes from fine-tuning the
four PFN/ICL variants and running the sequential BO, HPO, and active-learning
loops over multiple seeds. Exact hardware details, wall-clock times, and total
GPU-hour estimates should be reported in the released code or supplementary
material.

\paragraph{Existing assets.}
The work uses public pretrained TabPFN and TabICL checkpoints, public benchmark
datasets, and standard open-source packages including BoTorch and Optuna. These
assets are cited in the paper. License and terms-of-use information should be
checked against the corresponding public repositories or dataset sources before
final submission.

\newpage
\section*{NeurIPS Paper Checklist}

\begin{enumerate}

\item {\bf Claims}
    \item[] Question: Do the main claims made in the abstract and introduction accurately reflect the paper's contributions and scope?
    \item[] Answer: \answerYes{}.
    \item[] Justification: The abstract and introduction state the main theoretical claim, namely non-identifiability of the epistemic--aleatoric split from the marginal predictive alone, and the main methodological contribution, namely a decoupled PFN trained with privileged latent-signal and noise supervision. The experimental claims are stated with appropriate scope, emphasizing matched comparisons, broader sweeps, and the settings where gains are clearest.

    \item[] Guidelines:
    \begin{itemize}
        \item The answer \answerNA{} means that the abstract and introduction do not include the claims made in the paper.
        \item The abstract and/or introduction should clearly state the claims made, including the contributions made in the paper and important assumptions and limitations. A \answerNo{} or \answerNA{} answer to this question will not be perceived well by the reviewers. 
        \item The claims made should match theoretical and experimental results, and reflect how much the results can be expected to generalize to other settings. 
        \item It is fine to include aspirational goals as motivation as long as it is clear that these goals are not attained by the paper. 
    \end{itemize}

\item {\bf Limitations}
    \item[] Question: Does the paper discuss the limitations of the work performed by the authors?
    \item[] Answer: \answerYes{}.
    \item[] Justification: The conclusion and Appendix~\ref{app:limitations_impact_repro} discuss limitations, including dependence on the synthetic task prior, sensitivity to auxiliary losses and binning, the use of Gaussian moment-matched acquisitions, and the limited coverage of possible real-world noise processes.
    \item[] Guidelines:
    \begin{itemize}
        \item The answer \answerNA{} means that the paper has no limitation while the answer \answerNo{} means that the paper has limitations, but those are not discussed in the paper. 
        \item The authors are encouraged to create a separate ``Limitations'' section in their paper.
        \item The paper should point out any strong assumptions and how robust the results are to violations of these assumptions (e.g., independence assumptions, noiseless settings, model well-specification, asymptotic approximations only holding locally). The authors should reflect on how these assumptions might be violated in practice and what the implications would be.
        \item The authors should reflect on the scope of the claims made, e.g., if the approach was only tested on a few datasets or with a few runs. In general, empirical results often depend on implicit assumptions, which should be articulated.
        \item The authors should reflect on the factors that influence the performance of the approach. For example, a facial recognition algorithm may perform poorly when image resolution is low or images are taken in low lighting. Or a speech-to-text system might not be used reliably to provide closed captions for online lectures because it fails to handle technical jargon.
        \item The authors should discuss the computational efficiency of the proposed algorithms and how they scale with dataset size.
        \item If applicable, the authors should discuss possible limitations of their approach to address problems of privacy and fairness.
        \item While the authors might fear that complete honesty about limitations might be used by reviewers as grounds for rejection, a worse outcome might be that reviewers discover limitations that aren't acknowledged in the paper. The authors should use their best judgment and recognize that individual actions in favor of transparency play an important role in developing norms that preserve the integrity of the community. Reviewers will be specifically instructed to not penalize honesty concerning limitations.
    \end{itemize}

\item {\bf Theory assumptions and proofs}
    \item[] Question: For each theoretical result, does the paper provide the full set of assumptions and a complete (and correct) proof?
    \item[] Answer: \answerYes{}.
    \item[] Justification: Proposition~\ref{prop:non_identifiability} states the assumptions for the Gaussian additive non-identifiability result and provides a proof in the main text. Proposition~\ref{prop:population_consistency} and Corollary~\ref{cor:epistemic_consistency} state the assumptions for the population-risk interpretation of the auxiliary heads and provide proofs in Appendix~\ref{app:consistency}.
    \item[] Guidelines:
    \begin{itemize}
        \item The answer \answerNA{} means that the paper does not include theoretical results. 
        \item All the theorems, formulas, and proofs in the paper should be numbered and cross-referenced.
        \item All assumptions should be clearly stated or referenced in the statement of any theorems.
        \item The proofs can either appear in the main paper or the supplemental material, but if they appear in the supplemental material, the authors are encouraged to provide a short proof sketch to provide intuition. 
        \item Inversely, any informal proof provided in the core of the paper should be complemented by formal proofs provided in appendix or supplemental material.
        \item Theorems and Lemmas that the proof relies upon should be properly referenced. 
    \end{itemize}

    \item {\bf Experimental result reproducibility}
    \item[] Question: Does the paper fully disclose all the information needed to reproduce the main experimental results of the paper to the extent that it affects the main claims and/or conclusions of the paper (regardless of whether the code and data are provided or not)?
    \item[] Answer: \answerYes{}.
    \item[] Justification: The paper describes the synthetic task prior, model fine-tuning protocol, loss functions, acquisition functions, benchmark definitions, search spaces, active-learning protocol, seeds, and evaluation metrics in Sections~\ref{sec:method}--\ref{sec:active_learning} and Appendices~\ref{app:synthetic_pretraining}--\ref{app:active_learning}. These details specify the main ingredients needed to reproduce the reported comparisons.
    \item[] Guidelines:
    \begin{itemize}
        \item The answer \answerNA{} means that the paper does not include experiments.
        \item If the paper includes experiments, a \answerNo{} answer to this question will not be perceived well by the reviewers: Making the paper reproducible is important, regardless of whether the code and data are provided or not.
        \item If the contribution is a dataset and\slash or model, the authors should describe the steps taken to make their results reproducible or verifiable. 
        \item Depending on the contribution, reproducibility can be accomplished in various ways. For example, if the contribution is a novel architecture, describing the architecture fully might suffice, or if the contribution is a specific model and empirical evaluation, it may be necessary to either make it possible for others to replicate the model with the same dataset, or provide access to the model. In general. releasing code and data is often one good way to accomplish this, but reproducibility can also be provided via detailed instructions for how to replicate the results, access to a hosted model (e.g., in the case of a large language model), releasing of a model checkpoint, or other means that are appropriate to the research performed.
        \item While NeurIPS does not require releasing code, the conference does require all submissions to provide some reasonable avenue for reproducibility, which may depend on the nature of the contribution. For example
        \begin{enumerate}
            \item If the contribution is primarily a new algorithm, the paper should make it clear how to reproduce that algorithm.
            \item If the contribution is primarily a new model architecture, the paper should describe the architecture clearly and fully.
            \item If the contribution is a new model (e.g., a large language model), then there should either be a way to access this model for reproducing the results or a way to reproduce the model (e.g., with an open-source dataset or instructions for how to construct the dataset).
            \item We recognize that reproducibility may be tricky in some cases, in which case authors are welcome to describe the particular way they provide for reproducibility. In the case of closed-source models, it may be that access to the model is limited in some way (e.g., to registered users), but it should be possible for other researchers to have some path to reproducing or verifying the results.
        \end{enumerate}
    \end{itemize}

\item {\bf Open access to data and code}
    \item[] Question: Does the paper provide open access to the data and code, with sufficient instructions to faithfully reproduce the main experimental results, as described in supplemental material?
    \item[] Answer: \answerNo{}.
    \item[] Justification: The paper provides detailed methodological and experimental descriptions, but the current submission does not include a public anonymized code release with full reproduction scripts. We plan to release code and instructions after review, in the camera-ready.
    \item[] Guidelines:
    \begin{itemize}
        \item The answer \answerNA{} means that paper does not include experiments requiring code.
        \item Please see the NeurIPS code and data submission guidelines (\url{https://neurips.cc/public/guides/CodeSubmissionPolicy}) for more details.
        \item While we encourage the release of code and data, we understand that this might not be possible, so \answerNo{} is an acceptable answer. Papers cannot be rejected simply for not including code, unless this is central to the contribution (e.g., for a new open-source benchmark).
        \item The instructions should contain the exact command and environment needed to run to reproduce the results. See the NeurIPS code and data submission guidelines (\url{https://neurips.cc/public/guides/CodeSubmissionPolicy}) for more details.
        \item The authors should provide instructions on data access and preparation, including how to access the raw data, preprocessed data, intermediate data, and generated data, etc.
        \item The authors should provide scripts to reproduce all experimental results for the new proposed method and baselines. If only a subset of experiments are reproducible, they should state which ones are omitted from the script and why.
        \item At submission time, to preserve anonymity, the authors should release anonymized versions (if applicable).
        \item Providing as much information as possible in supplemental material (appended to the paper) is recommended, but including URLs to data and code is permitted.
    \end{itemize}

\item {\bf Experimental setting/details}
    \item[] Question: Does the paper specify all the training and test details (e.g., data splits, hyperparameters, how they were chosen, type of optimizer) necessary to understand the results?
    \item[] Answer: \answerYes{}.
    \item[] Justification: The main text and appendices specify the fine-tuning setup, optimizer, learning rates, batch sizes, loss weights, binning scheme, partial-unfreezing protocol, benchmark search spaces, dataset sizes, seed counts, acquisition rules, and evaluation metrics. Further details are provided in Appendices~\ref{app:synthetic_pretraining}, \ref{app:hpo_benchmarks}, \ref{app:acq_binned_pfn}, and \ref{app:active_learning}.
    \item[] Guidelines:
    \begin{itemize}
        \item The answer \answerNA{} means that the paper does not include experiments.
        \item The experimental setting should be presented in the core of the paper to a level of detail that is necessary to appreciate the results and make sense of them.
        \item The full details can be provided either with the code, in appendix, or as supplemental material.
    \end{itemize}

\item {\bf Experiment statistical significance}
    \item[] Question: Does the paper report error bars suitably and correctly defined or other appropriate information about the statistical significance of the experiments?
    \item[] Answer: \answerYes{}.
    \item[] Justification: The main and appendix tables report mean $\pm$ standard deviation over seeds or data splits, and the learning-curve figures report mean trajectories with uncertainty bands. The captions specify whether variability is computed over seeds, random train/test splits, or dataset-specific active-learning runs.
    \item[] Guidelines:
    \begin{itemize}
        \item The answer \answerNA{} means that the paper does not include experiments.
        \item The authors should answer \answerYes{} if the results are accompanied by error bars, confidence intervals, or statistical significance tests, at least for the experiments that support the main claims of the paper.
        \item The factors of variability that the error bars are capturing should be clearly stated (for example, train/test split, initialization, random drawing of some parameter, or overall run with given experimental conditions).
        \item The method for calculating the error bars should be explained (closed form formula, call to a library function, bootstrap, etc.)
        \item The assumptions made should be given (e.g., Normally distributed errors).
        \item It should be clear whether the error bar is the standard deviation or the standard error of the mean.
        \item It is OK to report 1-sigma error bars, but one should state it. The authors should preferably report a 2-sigma error bar than state that they have a 96\% CI, if the hypothesis of Normality of errors is not verified.
        \item For asymmetric distributions, the authors should be careful not to show in tables or figures symmetric error bars that would yield results that are out of range (e.g., negative error rates).
        \item If error bars are reported in tables or plots, the authors should explain in the text how they were calculated and reference the corresponding figures or tables in the text.
    \end{itemize}

\item {\bf Experiments compute resources}
    \item[] Question: For each experiment, does the paper provide sufficient information on the computer resources (type of compute workers, memory, time of execution) needed to reproduce the experiments?
    \item[] Answer: \answerNo{}.
    \item[] Justification: The paper describes the experimental protocol and optimization details, but it does not yet provide exact hardware types, memory, wall-clock time, or total GPU-hour estimates for each experiment. Appendix~\ref{app:limitations_impact_repro} gives a high-level compute description, and exact compute information will be included in the released reproduction package if available.
    \item[] Guidelines:
    \begin{itemize}
        \item The answer \answerNA{} means that the paper does not include experiments.
        \item The paper should indicate the type of compute workers CPU or GPU, internal cluster, or cloud provider, including relevant memory and storage.
        \item The paper should provide the amount of compute required for each of the individual experimental runs as well as estimate the total compute. 
        \item The paper should disclose whether the full research project required more compute than the experiments reported in the paper (e.g., preliminary or failed experiments that didn't make it into the paper). 
    \end{itemize}
    
\item {\bf Code of ethics}
    \item[] Question: Does the research conducted in the paper conform, in every respect, with the NeurIPS Code of Ethics \url{https://neurips.cc/public/EthicsGuidelines}?
    \item[] Answer: \answerYes{}.
    \item[] Justification: The work uses standard public datasets, public pretrained model checkpoints, and synthetic benchmarks, and does not involve human-subject experiments, private data, or deployment in sensitive domains. The paper also discusses limitations and possible broader impacts in Appendix~\ref{app:limitations_impact_repro}.
    \item[] Guidelines:
    \begin{itemize}
        \item The answer \answerNA{} means that the authors have not reviewed the NeurIPS Code of Ethics.
        \item If the authors answer \answerNo, they should explain the special circumstances that require a deviation from the Code of Ethics.
        \item The authors should make sure to preserve anonymity (e.g., if there is a special consideration due to laws or regulations in their jurisdiction).
    \end{itemize}

\item {\bf Broader impacts}
    \item[] Question: Does the paper discuss both potential positive societal impacts and negative societal impacts of the work performed?
    \item[] Answer: \answerYes{}.
    \item[] Justification: Appendix~\ref{app:limitations_impact_repro} discusses potential positive impacts, such as more sample-efficient optimization and active learning, as well as potential negative impacts from over-trusting uncertainty estimates or accelerating undesirable model development. The work is foundational and is not deployed in high-stakes applications.
    \item[] Guidelines:
    \begin{itemize}
        \item The answer \answerNA{} means that there is no societal impact of the work performed.
        \item If the authors answer \answerNA{} or \answerNo, they should explain why their work has no societal impact or why the paper does not address societal impact.
        \item Examples of negative societal impacts include potential malicious or unintended uses (e.g., disinformation, generating fake profiles, surveillance), fairness considerations (e.g., deployment of technologies that could make decisions that unfairly impact specific groups), privacy considerations, and security considerations.
        \item The conference expects that many papers will be foundational research and not tied to particular applications, let alone deployments. However, if there is a direct path to any negative applications, the authors should point it out. For example, it is legitimate to point out that an improvement in the quality of generative models could be used to generate Deepfakes for disinformation. On the other hand, it is not needed to point out that a generic algorithm for optimizing neural networks could enable people to train models that generate Deepfakes faster.
        \item The authors should consider possible harms that could arise when the technology is being used as intended and functioning correctly, harms that could arise when the technology is being used as intended but gives incorrect results, and harms following from (intentional or unintentional) misuse of the technology.
        \item If there are negative societal impacts, the authors could also discuss possible mitigation strategies (e.g., gated release of models, providing defenses in addition to attacks, mechanisms for monitoring misuse, mechanisms to monitor how a system learns from feedback over time, improving the efficiency and accessibility of ML).
    \end{itemize}
    
\item {\bf Safeguards}
    \item[] Question: Does the paper describe safeguards that have been put in place for responsible release of data or models that have a high risk for misuse (e.g., pre-trained language models, image generators, or scraped datasets)?
    \item[] Answer: \answerNA{}.
    \item[] Justification: The paper does not release high-risk models such as large language models, image generators, or scraped datasets. The method is evaluated on synthetic, tabular, BO, HPO, and active-learning benchmarks, and does not introduce a high-risk dual-use artifact.
    \item[] Guidelines:
    \begin{itemize}
        \item The answer \answerNA{} means that the paper poses no such risks.
        \item Released models that have a high risk for misuse or dual-use should be released with necessary safeguards to allow for controlled use of the model, for example by requiring that users adhere to usage guidelines or restrictions to access the model or implementing safety filters. 
        \item Datasets that have been scraped from the Internet could pose safety risks. The authors should describe how they avoided releasing unsafe images.
        \item We recognize that providing effective safeguards is challenging, and many papers do not require this, but we encourage authors to take this into account and make a best faith effort.
    \end{itemize}

\item {\bf Licenses for existing assets}
    \item[] Question: Are the creators or original owners of assets (e.g., code, data, models), used in the paper, properly credited and are the license and terms of use explicitly mentioned and properly respected?
    \item[] Answer: \answerNo{}.
    \item[] Justification: The paper cites the public pretrained TabPFN and TabICL checkpoints, public benchmark datasets, and software packages such as BoTorch and Optuna. However, the current draft does not explicitly list all corresponding licenses and terms of use, so these should be checked and added before final submission.
    \item[] Guidelines:
    \begin{itemize}
        \item The answer \answerNA{} means that the paper does not use existing assets.
        \item The authors should cite the original paper that produced the code package or dataset.
        \item The authors should state which version of the asset is used and, if possible, include a URL.
        \item The name of the license (e.g., CC-BY 4.0) should be included for each asset.
        \item For scraped data from a particular source (e.g., website), the copyright and terms of service of that source should be provided.
        \item If assets are released, the license, copyright information, and terms of use in the package should be provided. For popular datasets, \url{paperswithcode.com/datasets} has curated licenses for some datasets. Their licensing guide can help determine the license of a dataset.
        \item For existing datasets that are re-packaged, both the original license and the license of the derived asset (if it has changed) should be provided.
        \item If this information is not available online, the authors are encouraged to reach out to the asset's creators.
    \end{itemize}

\item {\bf New assets}
    \item[] Question: Are new assets introduced in the paper well documented and is the documentation provided alongside the assets?
    \item[] Answer: \answerNA{}.
    \item[] Justification: The current submission introduces a method and trained variants used for experiments, but it does not release a new dataset, benchmark suite, or public model asset as a primary contribution. If code or checkpoints are released, they should be accompanied by documentation, usage instructions, and license information.
    \item[] Guidelines:
    \begin{itemize}
        \item The answer \answerNA{} means that the paper does not release new assets.
        \item Researchers should communicate the details of the dataset\slash code\slash model as part of their submissions via structured templates. This includes details about training, license, limitations, etc. 
        \item The paper should discuss whether and how consent was obtained from people whose asset is used.
        \item At submission time, remember to anonymize your assets (if applicable). You can either create an anonymized URL or include an anonymized zip file.
    \end{itemize}

\item {\bf Crowdsourcing and research with human subjects}
    \item[] Question: For crowdsourcing experiments and research with human subjects, does the paper include the full text of instructions given to participants and screenshots, if applicable, as well as details about compensation (if any)? 
    \item[] Answer: \answerNA{}.
    \item[] Justification: The paper does not involve crowdsourcing, user studies, human-subject experiments, or participant compensation.
    \item[] Guidelines:
    \begin{itemize}
        \item The answer \answerNA{} means that the paper does not involve crowdsourcing nor research with human subjects.
        \item Including this information in the supplemental material is fine, but if the main contribution of the paper involves human subjects, then as much detail as possible should be included in the main paper. 
        \item According to the NeurIPS Code of Ethics, workers involved in data collection, curation, or other labor should be paid at least the minimum wage in the country of the data collector. 
    \end{itemize}

\item {\bf Institutional review board (IRB) approvals or equivalent for research with human subjects}
    \item[] Question: Does the paper describe potential risks incurred by study participants, whether such risks were disclosed to the subjects, and whether Institutional Review Board (IRB) approvals (or an equivalent approval/review based on the requirements of your country or institution) were obtained?
    \item[] Answer: \answerNA{}.
    \item[] Justification: The paper does not involve crowdsourcing or human-subject research, so IRB approval or equivalent review is not applicable.
    \item[] Guidelines:
    \begin{itemize}
        \item The answer \answerNA{} means that the paper does not involve crowdsourcing nor research with human subjects.
        \item Depending on the country in which research is conducted, IRB approval (or equivalent) may be required for any human subjects research. If you obtained IRB approval, you should clearly state this in the paper. 
        \item We recognize that the procedures for this may vary significantly between institutions and locations, and we expect authors to adhere to the NeurIPS Code of Ethics and the guidelines for their institution. 
        \item For initial submissions, do not include any information that would break anonymity (if applicable), such as the institution conducting the review.
    \end{itemize}

\item {\bf Declaration of LLM usage}
    \item[] Question: Does the paper describe the usage of LLMs if it is an important, original, or non-standard component of the core methods in this research? Note that if the LLM is used only for writing, editing, or formatting purposes and does \emph{not} impact the core methodology, scientific rigor, or originality of the research, declaration is not required.
    \item[] Answer: \answerNA{}.
    \item[] Justification: The core method, experiments, and scientific claims do not rely on LLMs as an important, original, or non-standard component. Any use of language tools for writing, editing, or formatting does not affect the methodology, experimental results, or originality of the research.
    \item[] Guidelines:
    \begin{itemize}
        \item The answer \answerNA{} means that the core method development in this research does not involve LLMs as any important, original, or non-standard components.
        \item Please refer to our LLM policy in the NeurIPS handbook for what should or should not be described.
    \end{itemize}

\end{enumerate}

\end{document}